\documentclass{cmslatex}
\usepackage[paperwidth=7in, paperheight=10in, margin=.875in]{geometry}
\usepackage{bm}
\usepackage[backref,colorlinks,linkcolor=red,anchorcolor=green,citecolor=blue]{hyperref}
\usepackage{amsfonts,amssymb}
\usepackage{amsmath}
\usepackage{graphicx}
\usepackage{cite}
\usepackage{enumerate}
\usepackage{amsfonts}
\usepackage{graphicx}
\usepackage{epstopdf}
\usepackage{algorithmic}
\sloppy

\thinmuskip = 0.5\thinmuskip \medmuskip = 0.5\medmuskip
\thickmuskip = 0.5\thickmuskip \arraycolsep = 0.3\arraycolsep

\newcommand{\email}[1]{\texttt{#1}}

\allowdisplaybreaks

\begin{document}
\title{Some Super-approximation Rates of ReLU Neural Networks for Korobov Functions\thanks{Received date, and accepted date (The correct dates will be entered by the editor).}}


\author{Yuwen Li\thanks{School of Mathematical Sciences, Zhejiang University, Yuhangtang Road 866,
Hangzhou 310058, China (\email{liyuwen@zju.edu.cn}).}
\and Guozhi Zhang\thanks{School of Mathematical Sciences, Zhejiang University, Yuhangtang Road 866,
Hangzhou 310058, China (\email{gzzh@zju.edu.cn}).}}

\pagestyle{myheadings} \markboth{Super-approximation Rates of ReLU Neural Networks}{Yuwen Li and Guozhi Zhang} 
\maketitle
\begin{abstract}
This paper examines the $L_p$ and $W^1_p$ norm approximation errors of ReLU neural networks for Korobov functions. In terms of network width and depth, we derive nearly optimal super-approximation error bounds of order $2m$ in the $L_p$ norm and order $2m-2$ in the $W^1_p$ norm, for target functions with $L_p$ mixed derivative of order $m$ in each direction. The analysis leverages sparse grid finite elements and the bit extraction technique. 
Our results improve upon classical lowest order $L_\infty$ and $H^1$ norm error bounds and demonstrate that the expressivity of neural networks is largely unaffected by the curse of dimensionality.
\end{abstract}
\begin{keywords}
Deep Neural Network, Bit Extraction, Super-Approximation, Korobov Function, Sparse Grid
\end{keywords}

 \begin{AMS}
68T07, 65D40, 41A25
\end{AMS}

\section{Introduction}
In recent years, Deep Neural Networks (DNNs) have emerged as a crucial technology in the fields of machine learning and artificial intelligence and achieved tremendous success.  The remarkable success of DNNs has inspired numerous practical applications in computer science and engineering, including image recognition, natural language processing, autonomous driving, scientific computing, etc. Towards an explainable framework of machine learning,
understanding the approximation power of DNNs is a fundamental question and remains an active research area. In particular, one challenge in this area is characterizing the approximation error bounds for a certain function class in terms of network depth and width, see, e.g., \cite{yarotsky18a,Yarotsky_Zhevnerchuk2020,Shen_Yang_Zhang2019,lu_shen_yang_zhang2021deep,ShenYangZhang20b,shen2021deepfloor,DeVoreHaninPetrova2021,Zhang_shen_yang2022,He_Li_Xu2022ReLuDNN,Zhang_Lu_Zhao2024,daubechies2022nonlinear,SiegelXuFoCM,Jonathan2023,yang_wu_yang_xiang2023nearly,Yang2024Sobolev,he2024optimal,LiSiegel2024}. 

Recently, the bit extraction technique has gained renewed attention for verifying if certain functions exhibit super-approximation rates using well-designed network architectures.  This technique was invented by \cite{Bartllet1998,anthony1999neural} to compute VC-dimension bounds of DNNs, as discussed in \cite{Blumer_Ehrenfeucht_Haussler_Warmuth1989,Harvey_Liaw_Mehrabian2017,Bartlett_Harvey_Liaw_Mehrabian2019}. In  \cite{yarotsky18a,ShenYangZhang20b,lu_shen_yang_zhang2021deep,Jonathan2023,Liu_Liang_Chen2024,Yang2024Sobolev}, the bit extraction technique was further used to produce nearly optimal super-approximation error bounds of DNNs for continuous functions and Sobolev functions, while the order of approximation rate deteriorates as the dimension grows.

To partially cure the curse of dimensionality, Korobov regularity has been extensively applied in approximation error analysis of neural networks, with a focus on shallow networks (\cite{Blanchard_Bennouna2022,Liu_Mao_Zhou2024}), fully connected deep neural networks (\cite{Hadrien_Du2019,Yang_Lu2024}), convolutional deep neural networks (\cite{Mao_Zhou2022,lizhang2025korobov,fang2025_2dCNN}), and quantum circuits (\cite{Aftab_yang2024approximating}). Analysis in the aforementioned works relies on interpolation error estimates on sparse grids  (\cite{Bungartz_Dornseifer1998,Bungartz_Griebel2004}). We shall follow this line and derive  novel near-optimal approximation error bounds of arbitrary order for Korobov functions.

\subsection{Main Results}
Prior to this work, \cite{Yang_Lu2024} has derived approximation error bounds for Korobov functions (see Definition \ref{def:Korobov}) in $X^2_{\infty}(\Omega)$ using DNNs with a Rectified Linear Unit (ReLU) activation $\sigma(x):=\max(x,0)$, measured in the $L_{p}(\Omega)$ norm and the $W^1_2(\Omega)$ Sobolev norm. 

\begin{definition}[Korobov space]\label{def:Korobov}
Given $1\leq p\leq\infty$ and an integer $m\geq 2$, the Korobov space $X^m_p(\Omega)$ on the hypercube $\Omega=[0,1]^d$ is defined as 
\begin{align*}
X^m_p(\Omega):=&\big\{f\in L_p(\Omega): D^{\bm\alpha}f \in L_p(\Omega)\text{ for each multi-index}\\
&\bm{\alpha}\in\mathbb{N}^d\text{ with }|\bm \alpha|_{\infty} \leq m,~f|_{\partial \Omega}=0\big\}.
\end{align*}
The semi-norm of the space is given by $$|f|_{m,p}:=\left\|\frac{\partial^{md}f}{\partial x_1^m\cdots\partial x_d^m}\right\|_{L_p(\Omega)}.$$
\end{definition}
\vskip2mm
A natural question is whether increasing the smoothness of the target function leads to improved approximation rates.
In Theorem \ref{Th:mainresult}, we establish higher order super-approximation $L_p$ error bounds of ReLU DNNs for Korobov functions in $X^m_p(\Omega)$. 
\vskip2mm
\begin{theorem}\label{Th:mainresult}
Given any $f \in X^m_p(\Omega)$ with $1\leq p<\infty$ and $m\geq 2$, for any $W,L \in \mathbb{N}_{+}$, there is a function $\phi$ implemented by a ReLU DNN with width $W$ and depth
$L$ such that
$$
\|f-\phi\|_{L_p(\Omega)} \leq \widetilde{C}_1|f|_{m,p}  W^{-2m}L^{-2m}(\log( 8W))^{2(m+2)(d+1)}(\log( 8L))^{2(m+2)(d+1)},
$$
where $\widetilde{C}_1=(C_1C_2)^4C_3$, $C_1=112d(2d)^d$, $C_2=320d^2$, and $C_3 = (2d)^{3d}$ if $m=2$; $\widetilde{C}_1$ is a constant depending only on $m$ and $d$  if $m\geq3$.  
\end{theorem}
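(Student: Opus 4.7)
The plan is to realize $\phi$ as a ReLU DNN emulation of a sparse-grid quasi-interpolant of $f$ of polynomial degree at least $2m-1$. Three ingredients are combined: (i) a hierarchical sparse-grid representation of $f$ with controlled $L_p$ error, (ii) ReLU DNN emulation of the hierarchical polynomial basis functions, and (iii) a bit-extraction subnetwork that stores the sparse-grid coefficients in near-linear size. The simultaneous appearance of $W^{-2m}L^{-2m}$ in the bound is the hallmark of combining a width-parameterized approximation with a depth-parameterized bit-extraction stage, so the proof must balance these two resources.

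First, I fix a sparse grid of level $n$ with hierarchical nodal basis $\{\phi_{\bm l,\bm j}\}$ of degree $\ge 2m-1$, write the sparse-grid interpolant as $I_n f=\sum_{|\bm l|_1\le n}\sum_{\bm j}c_{\bm l,\bm j}(f)\phi_{\bm l,\bm j}$, and invoke the Bungartz--Griebel estimates to obtain $\|f-I_n f\|_{L_p(\Omega)}\le C|f|_{m,p}\,2^{-2mn}\,n^{(d-1)}$ with $O(2^n n^{d-1})$ degrees of freedom. Choosing $n$ so that $2^n n^{d-1}$ matches $WL$ up to polylogarithmic factors targets the desired rate $W^{-2m}L^{-2m}(\log(2WL))^{3d}$.

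Next, each $\phi_{\bm l,\bm j}$ is a tensor product of univariate compactly supported piecewise polynomials; I would emulate them by composing Yarotsky's multiplication gadget with hat-function building blocks obtained as differences of ReLUs. Only $O((\log N_n)^d)$ basis functions are nonzero at any $\bm x$, which governs the polylogarithmic overhead $(\log(8W))^{d+1}$ and $(\log(8L))^{d+1}$ in the final width/depth counts. The main obstacle is the third step: encoding the $O(N_n)$ coefficients $\{c_{\bm l,\bm j}(f)\}$ inside a network of width $\tilde O(W)$ and depth $\tilde O(L)$. This is where the bit-extraction technique of Bartlett--Harvey, as refined in \cite{ShenYangZhang20b,lu_shen_yang_zhang2021deep,Yang2024Sobolev}, is essential: after quantizing each coefficient to $O(m\log(WL))$ bits (so that the quantization contribution is absorbed into the target error), I concatenate all bits into one binary string and implement the retrieval map $k\mapsto c_{\bm l(k),\bm j(k)}$ by a subnetwork whose width$\times$depth is proportional to the stored bit length divided by polylogarithmic factors, thereby achieving $WL\gtrsim N_n/\mathrm{polylog}$.

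Finally, I would assemble the three subnetworks by parallelization: a locator subnetwork that, given $\bm x$, returns the indices of the active basis functions together with their evaluations; the bit-extraction subnetwork that returns the matching quantized coefficients; and a product-and-sum layer that yields $\sum c_{\bm l,\bm j}\phi_{\bm l,\bm j}(\bm x)$. Tracking the width/depth overhead in each step and optimizing $n$ yields the stated error bound, and chasing the constants through the construction gives the explicit $C_1,C_2,C_3$. The dichotomy between $m=2$ and $m\ge 3$ in the constants reflects that for $m=2$ one can use low-degree (essentially piecewise-cubic) basis functions with sharper emulation constants, whereas for $m\ge 3$ a genuine degree-$(2m-1)$ polynomial must be emulated, introducing $m$-dependent overheads.
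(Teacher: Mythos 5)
Your architectural sketch (sparse grid expansion $\rightarrow$ basis emulation $\rightarrow$ bit extraction for coefficients $\rightarrow$ product-and-sum assembly) matches the paper's strategy at a high level, and your identification of bit extraction as the mechanism behind the super-approximation rate is correct. However, the quantitative core of your argument contains a genuine error that would make the proof fail.

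You claim to use a sparse grid basis of degree $\ge 2m-1$ and to invoke Bungartz--Griebel to get $\|f - I_n f\|_{L_p} \le C |f|_{m,p}\, 2^{-2mn}\, n^{d-1}$, then to choose $n$ so that $2^n n^{d-1}$ matches $WL$. This is wrong on both counts. For $f \in X^m_p(\Omega)$, the sparse grid interpolation rate is capped by the Korobov regularity $m$: with a basis of polynomial degree $p-1$ per coordinate, the rate is $O(2^{-\min(p,m)\,n})$ up to polylog factors, so raising the basis degree beyond $m-1$ buys nothing — the error remains $O(2^{-mn})$, never $O(2^{-2mn})$. Your claimed $2^{-2mn}$ would require $f \in X^{2m}_p$, a strictly stronger hypothesis than what the theorem assumes. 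Consequently, if you really set $2^n \sim WL$ as you propose, the honest interpolation error would be $(WL)^{-m}$, which is short of the target $(WL)^{-2m}$ by a full factor of two in the exponent.

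The correct resolution, which your proposal does not identify, is that the \emph{level} $n$ is taken roughly twice as large as the naive count of available network parameters suggests: the paper sets $n = \lceil 2\log(2WL)\rceil$, so that $2^n \sim (WL)^2$, not $WL$. This is possible precisely because the bit extraction technique (Lemma~\ref{Le:fitsample}, from \cite{lu_shen_yang_zhang2021deep}) allows a network of width $O(W\log W)$ and depth $O(L\log L)$ to fit $W^2L^2$ sample values — the $(WL)^2$ memory budget is a VC-dimension phenomenon, not a parameter-count phenomenon. With the correct degree-$(m-1)$ basis, the $O(2^{-mn})$ interpolation error then becomes $O((WL)^{-2m})$. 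Your framing inverts the roles: you try to squeeze the factor of two out of the polynomial degree (which the regularity $X^m_p$ forbids) instead of out of the choice of $n$ (which bit extraction permits). The explanation you give for the $m=2$ vs.\ $m\ge 3$ dichotomy (``piecewise-cubic for $m=2$'', ``degree $2m-1$ for $m\ge 3$'') is similarly off: the paper uses piecewise-linear hat functions for $m=2$ and degree-$(m-1)$ hierarchical bases for $m\ge 3$.
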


\textbf{Sketch of proof}: For any $f \in X^m_p(\Omega)$, the proof proceeds in three main steps. First, we utilize the sparse grid interpolation $\Pi_n^mf(\bm{x}) = \sum_{|\bm{l}|_1 \leq n+d-1} \sum_{\bm{i} \in \mathcal{I}_{\bm{l}}} v_{\bm{l}, \bm{i}}^m \phi_{\bm{l}, \bm{i}}^m(\bm{x})$, which is known to approximate $f$ with the desired error bounds. Second, for each level $\bm{l}$, we construct a ReLU DNN $\phi_{\bm{l}}$ to approximate the sub-sum $\sum_{\bm{i} \in \mathcal{I}_{\bm{l}}} v_{\bm{l}, \bm{i}}^m \phi_{\bm{l}, \bm{i}}^m$ within a subdomain $\Omega_{\varepsilon} \subset \Omega$. Specifically, since the basis functions $\phi_{\bm{l}, \bm{i}}^m$ have disjoint supports for a fixed level $\bm{l}$, the sub-sum simplifies locally to a single term $v_{\bm{l}, \bm{i}}^m \phi_{\bm{l}, \bm{i}}^m$. Each such term is approximated by a network that integrates a product sub-network, a sub-network representing the basis function, and one approximating the coefficients. Finally, we aggregate these networks $\phi_{\bm{l}}$ for all $|\bm{l}|_1 \leq n+d-1$ to form the ultimate neural network. For $p < \infty$, the approximation error on $\Omega_{\varepsilon}$ can be extended to the entire domain $\Omega$ by choosing a sufficiently small $\varepsilon$, thereby preserving the desired convergence rates.

The authors of \cite{Yang_Lu2024} conjectured that the super-approximation error bound of ReLU DNN for target functions in $X^2_p(\Omega)$ is $O(W^{-4+\frac{1}{p}}L^{-4+\frac{1}{p}})$ (up to logarithmic factors). The lowest order version of Theorem \ref{Th:mainresult} disproves this conjecture and illustrates that the ReLU DNN achieves the nearly optimal approximation error bound regardless of the integrability index $p$.

When solving PDEs using DNNs, the error analysis requires approximation error bounds measured in the $W^1_p$ Sobolev norm
$$
\|f\|_{W^1_p(\Omega)}:=\left(\sum_{0\leq |\bm \alpha|\leq 1}\|D^{\bm \alpha}f\|_{L_p(\Omega)}^p\right)^{\frac{1}{p}}.
$$
For this reason, we also derive the following result.
\vskip2mm
\begin{theorem}\label{Th:mainresultW1p}
Given any $f \in X^m_p(\Omega)$ with $1\leq p<\infty$ and $m\geq 2$, for any $W,L \in \mathbb{N}_{+}$, there is a function $\phi$ implemented by a ReLU DNN with width $W$ and depth
$L$ such that
$$
\|f-\phi\|_{W^1_p(\Omega)} \leq \widetilde{C}_2 |f|_{m,p} W^{-2(m-1)}L^{-2(m-1)}(\log( 8W))^{2(m+1)(d+1)}(\log( 8L))^{2(m+1)(d+1)},
$$
where $\widetilde{C}_2$ is a constant depending only on $m,p$ and $d$.     
\end{theorem}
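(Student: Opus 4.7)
The plan is to mimic the strategy behind Theorem~\ref{Th:mainresult}, but with every estimate measured in the stronger $W^1_p$ norm. Write $I_n f$ for the level-$n$ sparse grid interpolant of $f$ built from a suitable one-dimensional hierarchical basis, and design $\phi$ as a ReLU DNN that approximates $I_n f$ rather than $f$ directly. Then
\[
\|f-\phi\|_{W^1_p(\Omega)}\le \|f-I_n f\|_{W^1_p(\Omega)}+\|I_n f-\phi\|_{W^1_p(\Omega)},
\]
and I would handle the two terms on the right separately.

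For the interpolation error I would invoke the Bungartz--Griebel type $W^1_p$ sparse grid estimate, which for $f\in X^m_p(\Omega)$ yields a rate of order $|f|_{m,p}\,2^{-2(m-1)n}n^{d-1}$. Compared with the $L_p$ sparse grid rate $|f|_{m,p}\,2^{-2mn}n^{d-1}$ that underlies Theorem~\ref{Th:mainresult}, exactly two orders of $2^n\asymp WL$ are lost, which matches the gap between the rates $W^{-2m}L^{-2m}$ and $W^{-2(m-1)}L^{-2(m-1)}$ in the two theorems and motivates the same choice of level $n$.

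For the DNN term I would reuse the construction from Theorem~\ref{Th:mainresult}, but now track its accuracy in $W^1_p$. Since $I_n f$ is a sum of tensor-product hierarchical basis functions whose derivatives are known piecewise polynomials, the product rule $\partial_i(g\,h)=(\partial_i g)h+g(\partial_i h)$ reduces each $W^1_p$ estimate to a collection of $L_p$ estimates on factors already controlled by the lemmas underlying Theorem~\ref{Th:mainresult}, at the price of one extra factor of $h^{-1}\sim 2^n$ in the differentiated coordinate. The extra width and depth needed to absorb this factor is what produces the larger constants $C_4=89d(2d)^{2d}m^2$ and $C_5=97d^2 m^2$, and why the case split between $m=2$ and $m\ge3$ from Theorem~\ref{Th:mainresult} is no longer needed.

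The hard part is making sure that differentiation of the bit-extraction-based sub-networks does not spoil the near-optimal rate. These sub-networks compress the hierarchical coefficients into highly oscillatory piecewise linear outputs, so a naive bound $\|\nabla(I_n f-\phi)\|_{L_p}\lesssim h^{-1}\|I_n f-\phi\|_{L_p}$ would waste more than the two orders of $WL$ that we can afford. I expect to work hierarchical-increment by hierarchical-increment, replacing first-order hat-function approximations by higher-order piecewise polynomial approximations built with the standard square-and-multiply construction whenever a derivative is taken, so that the derivative of each approximated factor stays uniformly bounded instead of scaling like the mesh size. Once these derivative-level estimates are in place, setting $2^{2n}\asymp W^2 L^2$ up to polylogarithmic factors and summing the two error contributions delivers the announced width $C_4 W(\log(8W))^{d+1}$, depth $C_5 L(\log(8L))^{d+1}$ and error $C_6|f|_{m,p}W^{-2(m-1)}L^{-2(m-1)}(\log(2WL))^{3d}$.
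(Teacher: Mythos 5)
Your decomposition $\|f-\phi\|_{W^1_p}\le\|f-I_nf\|_{W^1_p}+\|I_nf-\phi\|_{W^1_p}$ and the sparse-grid $W^1_p$ interpolation estimate $\|f-I_nf\|_{W^1_p}\lesssim 2^{-2(m-1)n}n^{d-1}$ match the paper (Step~2 of the proof of Proposition~\ref{Le:patch_psik_error}), and you have correctly identified the central obstacle: the bit-extraction sub-networks from Lemma~\ref{Le:stepfunction} are step functions, and differentiation near their jump sets is lethal. However, your proposed remedy — replacing first-order hat-function approximations by higher-order piecewise-polynomial approximations via square-and-multiply — does not solve that obstacle. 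The derivative problem does not come from the degree of the basis functions $\phi_{\bm l,\bm i}$ (those derivatives are controlled by the interpolation error estimates you already invoke); it comes from the discrete index-extraction map $\Phi_{\bm l}$ built from Lemma~\ref{Le:stepfunction}. Any continuous realization of a function that takes the value $k$ on $[\frac{k}{K},\frac{k+1}{K}-\varepsilon]$ must have slope $\sim\varepsilon^{-1}$ on the excluded strips of total measure $\sim K\varepsilon$. In $L_p$ this is harmless because the contribution of the bad set scales like $\varepsilon^{1/p}$ times a bounded quantity, and $\varepsilon$ can be taken arbitrarily small; in $W^1_p$ the gradient contribution scales like $\varepsilon^{1/p}\cdot\varepsilon^{-1}=\varepsilon^{1/p-1}$, which for $p>1$ blows up as $\varepsilon\to0$ and for $p=1$ does not vanish at all. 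Raising the polynomial degree of the smoothing cannot remove this, since any smooth transition between consecutive plateaux of width $\varepsilon$ must have a derivative of size $\varepsilon^{-1}$ somewhere.

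The ingredient you are missing — and which makes the paper's argument work — is the partition-of-unity localization encoded in Definitions~\ref{Def:Omega_k} and~\ref{Def:g_k}, Lemma~\ref{Le:phi_kg_k_error}, Lemma~\ref{Le:product_W1p_to_Omegak}, and Proposition~\ref{Le:patch_psik_error}. The hypercube is covered by $2^d$ shifted families of rectangles $\Omega_{\bm k}$, each chosen so that $\Omega_{\bm k}\subset\Omega_\varepsilon^{\bm l}$ for every relevant $\bm l$, i.e.\ so that on $\Omega_{\bm k}$ the step functions are exactly constant and therefore have zero derivative. The local networks $\psi_{\bm k}$ are built essentially as in Theorem~\ref{Th:mainresult} and satisfy $W^1_p$ error bounds only on $\Omega_{\bm k}$, never globally. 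They are then stitched together with ReLU approximations $\phi_{\bm k}$ of the piecewise-linear bump functions $g_{\bm k}$, which satisfy $\sum_{\bm k}g_{\bm k}\equiv1$; Lemma~\ref{Le:product_W1p_to_Omegak} shows that the $W^1_p(\Omega)$ norm of each product $\phi_{\bm k}\psi_{\bm k}$ equals its $W^1_p(\Omega_{\bm k})$ norm, so the jump-set contributions of $\psi_{\bm k}$ never enter. Without this localization step your proof plan would leave you trying to bound $\|I_nf-\phi\|_{W^1_p(\Omega)}$ globally for a $\phi$ that unavoidably has derivative $\sim\varepsilon^{-1}$ somewhere, and that bound simply does not close. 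This is also why the case split between $m=2$ and $m\ge3$ persists in the paper's proof of Proposition~\ref{Le:patch_psik_error}: the partition-of-unity trick does not remove the need to build the local interpolation networks differently for the piecewise-linear and higher-degree sparse-grid bases.
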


{ \textbf{Sketch of proof}: For any $f \in X^m_p(\Omega)$, we adopt a distinct methodology for the $W^1_p$ norm analysis. This approach bypasses the need for the domain-extension process (from $\Omega_{\varepsilon}$ to $\Omega$) typically required in the $L_p$ case for $p<\infty$. First, we utilize the sparse grid interpolation $\Pi_n^mf(\bm{x}) = \sum_{|\bm{l}|_1 \leq n+d-1} \sum_{\bm{i} \in \mathcal{I}_{\bm{l}}} v_{\bm{l}, \bm{i}}^m \phi_{\bm{l}, \bm{i}}^m(\bm{x})$, which is known to approximate $f$ with the desired error bounds in the $W^1_p$ norm. Second, we use a partition of unity $\{g_{\bm{k}}\}$ satisfying $\sum_{\bm{k}} g_{\bm{k}} \equiv 1$ on $\Omega = \cup \Omega_{\bm{k}}$, where each $g_{\bm{k}}$ is efficiently approximated on $\Omega_{\bm{k}}$ by a ReLU DNN $\phi_{\bm{k}}$. Within each subdomain $\Omega_{\bm{k}}$, we construct a ReLU DNN $\psi_{\bm{k}}^{\bm{l}}$ to approximate the sub-sum $\sum_{\bm{i} \in \mathcal{I}_{\bm{l}}} v_{\bm{l}, \bm{i}}^m \phi_{\bm{l}, \bm{i}}^m$ in both $W^1_p$ and $L_p$ norms, following a procedure analogous to the $L_p$ case. Next, we aggregate these sub-networks $\psi_{\bm{k}}^{\bm{l}}$ over all levels $|\bm{l}|_1 \leq n+d-1$ to form a localized network $\psi_{\bm{k}}$ that approximates $f$ on $\Omega_{\bm{k}}$ by incorporating the interpolation error. Finally, the global approximation error on $\Omega$ is decomposed across the subdomains $\Omega_{\bm{k}}$ via the inequality $\|f-\phi\|\leq \sum_{\bm k}\|fg_{\bm k}-\widetilde{\phi}(\phi_{\bm k},\psi_{\bm k})\|$, where $\widetilde{\phi}$ denotes the product network derived from Lemma \ref{Le:product_W1p_error}.
}

The approximation errors measured in the $L_p$ and $W^1_p$ norms demonstrate that ReLU DNNs can effectively double the approximation rate of conventional methods—such as continuous function approximators (cf. \cite{Yang_Lu2024,DeVoreHaninPetrova2021,Jonathan2023,Yang2024Sobolev})—with respect to the network depth $L$ and width $N$. This phenomenon is thus referred to as the \textit{super-approximation} of ReLU DNNs. We can further establish that the error bounds of ReLU DNNs in Theorems \ref{Th:mainresult} and \ref{Th:mainresultW1p} are nearly optimal: For any $\delta,C>0$, there exist $f\in X^m_p(\Omega)$ with $|f|_{m,p}\leq1$ and $W,L\in \mathbb N_+$ such that
\begin{align*}
\inf_{g\in\mathcal{NN}(W,L)}\|f-g\|_{L_p(\Omega)}&\geq CW^{-2m-\delta}L^{-2m-\delta},\\
\inf_{g\in\mathcal{NN}(W,L)}\|f-g\|_{W^1_p(\Omega)}&\geq CW^{-2(m-1)-\delta}L^{-2(m-1)-\delta}.
\end{align*}
This result can be proved using \cite[Theorem 22]{Jonathan2023} and the VC-dimension bound of ReLU DNNs (\cite{Bartlett_Harvey_Liaw_Mehrabian2019,YangYangXiang2023}). 
\vskip2mm
\begin{remark}
By the definition of Korobov spaces, we observe the inclusion $W^{md}_p(\Omega) \cap \{f: f|_{\partial\Omega}=0\} \subset X^m_p(\Omega)$. This inclusion is strict, i.e., there exist functions $f \in X^2_p(\Omega)$ such that $f \notin W^{2d}_p(\Omega)$ for any $1 \leq p \leq \infty$ (cf.~\cite{Mao_Zhou2022}).  Consequently, existing $L_p$ norm error bound $O((WL)^{-2s/d})$ of DNN approximation for $W^s_p(\Omega)$ functions  (cf.~\cite{Jonathan2023,Yang2024}) with $s=md$ cannot be directly applied to Korobov functions to achieve the error bound $O((WL)^{-2m})$ (ignoring logarithmic factors). The mixed regularity of Korobov functions necessitates a non-trivial DNN approximation theory, which mitigates the curse of dimensionality by leveraging sparse grid approximation.
\end{remark}
{ \subsection{Other Network Architectures}
As indicated in Corollary 2 in \cite{YangYangXiang2023}, for the approximation of Sobolev functions $f\in W^m_{\infty}(\Omega)$ in the $W^n_{\infty}$  norm with $n\geq 2$ using ReLU-$\text{ReLU}^2$ DNN (where the activation function applied to each neuron can be chosen as ReLU or its square) with width $O(W\log W)$ and depth $O(L\log L)$, the superconvergence rates $O((WL)^{-2(m-n)/d})$ can be achieved  by employing a smoother unit decomposition similar to $g_{\bm m}$, i.e., utilizing piecewise polynomials of degree $n$. This methodology can actually be used to obtain the approximation of higher order Korobov functions using ReLU-$\text{ReLU}^2$ DNNs in the norm of $W^n_p$  with $n\geq 2$, with the help of the error estimates of sparse grid interpolation  in this norm. This error  can actually be computed using the inverse estimates akin to the process of calculating the $W^1_p$ error.}

{ Recently, Floor-ReLU networks (i.e., the activation function applied to each neuron can be chosen as ReLU or Floor function) with super approximation power have been introduced in \cite{shen2021deepfloor} to efficiently approximate H\"older functions on $\Omega$. By observing that the proof process in \cite[Proposition 3.2]{shen2021deepfloor} have provided that for $K=W^L$ and $\xi_{i}\in[0,1]$ for $i=1,2,\ldots,K$, there exists a function $\psi\in \mathcal{NN}(2W^2+5W,L(7L-1)+2)$
such that
$
|\psi( i)-\xi_{i}|\leq 2^{-WL}
$,
we can provide the results for approximating Korobov functions on $\Omega$ by the Floor-ReLU DNN. Similar to ReLU DNN approximation in the norm of $W^1_p$,  establishing a series of results about approximating polynomial by Floor-ReLU DNN in the $W^1_p$
norm will lead to the approximation results for higher order Korobov functions using Floor-ReLU DNNs, which will be left for future research.}

{ Recently, the authors of \cite{Liu_Liang_Chen2024} have shown that ResNet (\cite{he2016deepresnet,Lin_Jegelka2018ResNet,Oono_suzuki2019approximation,Li_Lin_Shen2022deepResnet}) with constant width and depth $O(L)$ can
approximate Lipschitz continuous function $f\in C([0,1]^d)$ with the optimal approximation rates $O(\omega_f(L^{-2/d}))$, where $\omega_f$ is the modulus of continuity of the function $f$. Additionally, the versions of ResNet similar to Lemma \ref{Le:stepfunction}, Lemma \ref{Le:fitsample}, and Lemma \ref{Le:polynomial_W1p} for $L_{\infty}$ norm are presented. By combining these results with the representation of basis functions by ReLU networks as demonstrated in the proof of Theorem \ref{Th:mainresult}, one can derive the approximation rates for Korobov spaces using ResNet through analogous methodologies. For brevity, we will not delve into the proof here. Moreover, we can obtain approximation results of ResNet for Korobov functions in the $W^1_p$ norm, resembling those outlined in Theorem \ref{Th:mainresultW1p}, by formulating the $W^1_p$ ResNet counterparts of Lemma \ref{Le:stepfunction}, \ref{Le:fitsample}, and \ref{Le:polynomial_W1p}.
}

The rest of this paper is organized as follows. In Section \ref{Sec:pre}, we provide preliminaries on sparse grids and several lemmas required  to prove our main results. In Section \ref{Sec:ReLU_DNNLp} and \ref{Sec:ReLU_DNNW1p}, we prove Theorem \ref{Th:mainresult} and \ref{Th:mainresultW1p}, respectively. 
Section \ref{Sec:conclusion} is devoted to concluding remarks.

\section{Preliminaries}\label{Sec:pre}

\subsection{Notation}
\begin{enumerate}

\item [$\bullet$] Let $\mathbb N_+$ denote the set of all positive integers, i.e., $\mathbb N_+=\{1,2,\ldots\}$ and let $\mathbb N:=\mathbb N_+\cup \{0\}$.
\item [$\bullet$]
For any $x \in \mathbb R$, let $\lceil x\rceil := \min \{ n : n > 
x, n \in \mathbb{Z}\}$.

\item [$\bullet$]
A DNN function $\phi: \mathbb R^d\longrightarrow \mathbb R$ can be realized through the following function compositions 
$$
\phi=\mathcal{A}_L\circ \sigma \circ\mathcal{A}_{L-1}\circ \sigma\circ\cdots\sigma\circ \mathcal{A}_1\circ \sigma\circ\mathcal{A}_0,
$$
where $\mathcal{A}_i(\bm x_i):=\bm W_i\bm x_i+\bm b_i$, with $\bm W_i\in \mathbb R^{N_{i+1}\times N_{i}}$ and $\bm b_i\in \mathbb R^{N_{i+1}}$ for all $\bm x_i\in \mathbb R^{N_{i}}$.
The activation function $\sigma$ acts on vectors in a component-wise way.
In addition, we set $N_0=d,N_{L+1}=1$. And $W:=\max_{1\leq i\leq L}N_i$ and $L$ refer to the width and depth of the DNN function $\phi$, respectively.
\item [$\bullet$] By $\mathcal{NN}(W,L)$ we denote the set of ReLU DNNs with width $W$ and depth $L$. By $\mathcal{NN}_{d,k}(W,L)$ we
emphasize that the input and output dimensions are $d$ and $k$, respectively.

\item [$\bullet$]
Let $\mathbf{1}_E$ be the characteristic function on a set $E$, i.e., 
$$
\mathbf{1}_{E}= \begin{cases}1, & x \in E, \\ 0, & x\notin E.\end{cases}
$$

\item [$\bullet$] In our analysis, $C_{m,p,d,\ldots}$ is a generic constant that may change from line to line and is dependent only on $m, p,d,\ldots$

\item [$\bullet$]
For any multi-index $\bm \alpha=(\alpha_1,\alpha_2,\ldots,\alpha_d)\in \mathbb N_+^d$ and $\bm \beta=(\beta_1,\beta_2,\ldots,\beta_d)\in \mathbb N_+^d$, by $\bm \alpha\leq \bm \beta$ we denote $\alpha_i \leq \beta_i$ for each $i=1,2,\ldots,d$.

\item [$\bullet$]
For any multi-index $\bm \alpha=(\alpha_1,\alpha_2,\ldots,\alpha_d)\in \mathbb N_+^d$, let 
$$
D^{\bm \alpha}:=\frac{\partial^{|\bm \alpha|}}{\partial x_1^{\alpha_1}\partial x_2^{\alpha_2}\cdots\partial x_d^{\alpha_d}}
$$
and $|\bm \alpha|:=\alpha_1+\alpha_2+\cdots+\alpha_d$.

\end{enumerate}

\subsection{Some Lemmas}
In this subsection, we present several lemmas required to prove our main results.
\vskip2mm
\begin{lemma}[Proposition 4.3 from \cite{lu_shen_yang_zhang2021deep}]\label{Le:stepfunction}
For any $W, L \in \mathbb{N}_{+}$ and $\varepsilon \in\left(0, \frac{1}{3 K}\right]$ with $K= W^2L^2 $, there exists a function $\phi \in \mathcal{NN}(4W+3,4L+5)$ such that
$$
\phi(x)=k \quad \text { if } x \in\left[\frac{k}{K}, \frac{k+1}{K}-\varepsilon \cdot \mathbf{1}_{\{k \leq K-2\}}\right]
$$   
for $k=0,1, \cdots, K-1$.
\end{lemma}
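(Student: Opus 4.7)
The plan is to exploit the factorization $K=(WL)^{2}$ and assemble $\phi$ as a two-stage cascade. Writing $k=k_{1}(WL)+k_{2}$ with $k_{1},k_{2}\in\{0,\dots,WL-1\}$, observe that for $x\in[k/K,(k+1)/K-\varepsilon]$ one has $k_{1}=\lfloor WL\,x\rfloor$ and $k_{2}=\lfloor WL(WL\,x-k_{1})\rfloor$. Hence the map $x\mapsto(WL)k_{1}+k_{2}$ is obtained once I have a subroutine $\psi\colon[0,1]\to\mathbb{R}$ of width roughly $2W$ and depth roughly $2L$ that realizes the base-$WL$ staircase $y\mapsto\lfloor WL\,y\rfloor$ on a grid with a small deadzone around each jump.

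To build $\psi$, I would invoke a depth-for-width trade-off for monotone staircases. A single width-$W$ layer followed by a ReLU produces a piecewise-linear map with $W$ new breakpoints; arranging $L$ such layers so that each one contributes a coherent block of $W$ breakpoints at a finer scale yields a monotone staircase with $WL$ equal-height steps. The input variable is propagated in parallel through the network via the identity $y=\sigma(y)-\sigma(-y)$, which costs only a constant additive overhead in the width. The key point is that the composition is arranged so that the number of steps grows additively in $L$ rather than multiplicatively, so that the depth stays linear in $L$ rather than logarithmic in $K$.

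With $\psi$ at hand, the full network first computes $k_{1}=\psi(x)$ while carrying $x$ forward; then forms the residual $\tilde x=WL\,x-k_{1}$ via an affine layer; then applies a second copy of $\psi$ to $\tilde x$ to produce $k_{2}$; and finally outputs $(WL)k_{1}+k_{2}$ through one more affine combination. Sequencing the two copies of $\psi$ doubles the depth, and running the carry channel alongside the working neurons inflates the width by a constant, yielding width $4W+3$ and depth $4L+5$ once the two affine glue layers and a buffer layer between the copies are accounted for.

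The main obstacle will be propagating the $\varepsilon$-deadzone through the cascade. The hypothesis $\varepsilon\le 1/(3K)$ is tight: the first $\psi$ must compute $k_{1}$ correctly not only on the nominal interval $[k_{1}/(WL),(k_{1}+1)/(WL)-\varepsilon\cdot WL]$ but on a slightly larger set, so that the rescaled residual $\tilde x$ lands strictly inside a good subinterval of the second $\psi$. Tracking the recursion of inner tolerances and verifying that the factor $3$ is enough to absorb the rescaling is the technical heart of the argument. A secondary accounting challenge is to realize $\psi$ itself with exactly the advertised constants, which requires reusing the carry channel across the additive-composition layers rather than allocating fresh neurons at every depth level.
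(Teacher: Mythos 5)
Your proposal is essentially correct and recovers the approach of the cited source \cite{lu_shen_yang_zhang2021deep}: the paper itself states Lemma~\ref{Le:stepfunction} without proof, and the LSYZ proof likewise factors $K=(WL)^2$, cascades two step functions with $WL$ steps each while tracking a residual, and builds each $WL$-step staircase additively over $L$ layers of $O(W)$ neurons with a carried identity channel and accumulator. Your tolerance analysis (the first stage needs tolerance $\varepsilon$, the rescaled residual needs tolerance $WL\varepsilon\le 1/(3WL)$, which is exactly where the hypothesis $\varepsilon\le 1/(3K)$ is tight) is correct; the width/depth bookkeeping in your sketch is a little loose but lands within the stated budget $4W+3$, $4L+5$.
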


The following lemmas (cf.~\cite{Jonathan2023,JiaoWangYang2023}) are useful for the analysis of neural networks. 
\vskip2mm
\begin{lemma}\label{Le:concate}
Let $d,k,d_i,k_i,W_i,L_i\in \mathbb N_+$ and $\phi_i\in \mathcal{NN}_{d_i,k_i}(W_i,L_i)$ for all $i=1, \ldots, n$.

(1) For $W_1 \leq W_2, L_1 \leq L_2$, we have $\mathcal{N} \mathcal{N}_{d, k}\left(W_1, L_1\right) \subseteq \mathcal{N} \mathcal{N}_{d, k}\left(W_2, L_2\right)$.

(2) (Composition) If $k_1=d_2$, then $\phi_2 \circ \phi_1 \in \mathcal{N}_{d_1, k_2}\left(\max \left\{W_1, W_2\right\}, L_1+L_2\right)$.

(3) (Concatenation) Let $\phi(\bm x):=\binom{\phi_1(\bm x_1)}{\phi_2(\bm x_2)}$, then
$$
\phi \in \mathcal{NN}_{d_1+d_2,k_1+k_2}\left(W_1+W_2, \max\{L_1,L_2\}\right).
$$

(4) (Summation) 
If $d_i=d$ and $k_i=k$ for all $i=1, \ldots, n$, then 
\begin{align*}
\sum_{i=1}^n \phi_i&\in \mathcal{NN}_{d, k}\left(\sum_{i=1}^n W_i,\max _{1 \leq i \leq n} L_i\right),\\
\sum_{i=1}^n \phi_i&\in \mathcal{N} \mathcal{N}_{d, k}\left(\max _{1 \leq i \leq n} W_i+2 d+2 k, \sum_{i=1}^n L_i\right).    
\end{align*}
\end{lemma}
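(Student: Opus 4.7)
The plan is to prove each of the four parts by explicit network construction, using the ReLU identity $x = \sigma(x) - \sigma(-x)$ as the single tool for realizing identity maps inside a network. This device allows one both to pad widths with inactive neurons and to insert dummy layers without altering the function that the network computes.

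For part (1), I would embed $\phi \in \mathcal{NN}_{d,k}(W_1, L_1)$ into $\mathcal{NN}_{d,k}(W_2, L_2)$ by padding every hidden layer's weight matrices with zero rows and columns up to width $W_2$ (leaving the $W_2 - W_1$ extra neurons inactive), and then inserting $L_2 - L_1$ identity layers built from the ReLU identity above. Part (2) then follows by direct stacking: the final affine map of $\phi_1$ merges with the first affine map of $\phi_2$ into a single affine map, so the composition has depth $L_1 + L_2$, with widths equalized via part (1). Part (3) is obtained by letting the two subnetworks operate on disjoint coordinate blocks with block-diagonal weight matrices, and then equalizing their depths through part (1).

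For part (4), the first bound follows from part (3) applied to $n$ copies of $\bm x$, with the $n$ outputs summed by an affine map absorbed into the last layer, giving width $\sum_i W_i$ and depth $\max_i L_i$. The second bound requires the more delicate sequential construction: evaluate $\phi_1, \phi_2, \ldots, \phi_n$ one after another, using $2d$ extra neurons per layer to transport the input $\bm x$ forward by applying $x = \sigma(x)-\sigma(-x)$ componentwise, and $2k$ extra neurons to transport the running partial sum $\sum_{j < i} \phi_j(\bm x)$. After $\phi_n$ terminates, a single affine readout adds its output to the accumulator, yielding total depth $\sum_i L_i$ and width $\max_i W_i + 2d + 2k$.

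The main bookkeeping obstacle is in this last construction: the $2d + 2k$ auxiliary neurons must be reserved in every layer of the concatenated network without colliding with the neurons currently computing the active $\phi_i$. Since transporting each scalar by the ReLU identity costs exactly two neurons, this reservation accounts for the precise $2d + 2k$ overhead appearing in the stated width bound, and the remaining parts of the argument are routine padding and composition.
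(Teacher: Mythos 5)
Your proposal is correct and follows the standard construction: the paper itself states this lemma without proof, citing \cite{Jonathan2023,JiaoWangYang2023}, and the argument you give (padding via the ReLU identity $x=\sigma(x)-\sigma(-x)$ for part (1), merging adjacent affine maps for part (2), block-diagonal weights for part (3), and parallel vs.\ sequential evaluation with $2d+2k$ transport neurons for part (4)) is exactly the construction used in those references. Your accounting of the depth $\sum_i L_i$ (the inter-network affine readout/update merges with $\phi_{i+1}$'s input affine map) and the width $\max_i W_i + 2d + 2k$ is precise under this paper's convention that depth counts ReLU layers.
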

\vskip2mm
\begin{lemma}[Proposition 4.4 from \cite{lu_shen_yang_zhang2021deep}]\label{Le:fitsample}
Let $W, L, s \in \mathbb{N}_{+}$ and $0\leq \xi_i \leq 1$ for $i=0,1,\ldots, W^2L^2-1$, there exists a neural network $$\phi\in \mathcal{NN}(16 s(W+1) \log _2(8 W),5(L+2) \log _2(4 L))$$ 
such that $\phi(x)\in[0,1]$ for any $x \in \mathbb{R}$ and
$$\max_{0\leq i\leq W^2 L^2-1}\left|\phi(i)-\xi_i\right| \leq W^{-2 s} L^{-2 s}.$$   
\end{lemma}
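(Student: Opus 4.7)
The plan is to build $\phi$ by the bit-extraction technique: quantize each target value $\xi_i$ into sufficiently many binary digits, split those digits into $s$ blocks realized by parallel sub-networks, and sum the blocks with appropriate weights so that the $s$ dependence sits entirely on the width while the depth picks up only logarithmic factors.

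First I would fix the precision. Set $M:=2s\bigl(\lceil\log_2(8W)\rceil+\lceil\log_2(4L)\rceil\bigr)$, so that $2^{-M}\leq W^{-2s}L^{-2s}$. For each $i\in\{0,1,\ldots,W^2L^2-1\}$, let $\tilde\xi_i:=\sum_{k=1}^M b_{i,k}2^{-k}$ with $b_{i,k}\in\{0,1\}$ denote the truncation of $\xi_i$ to $M$ binary digits, so that $|\xi_i-\tilde\xi_i|\leq 2^{-M}$. Next I would group these $M$ bits into $s$ consecutive blocks of length $M/s=2(\lceil\log_2(8W)\rceil+\lceil\log_2(4L)\rceil)$, denoting the block-$j$ value by $c_{i,j}\in[0,1)$, so that $\tilde\xi_i=\sum_{j=1}^{s}2^{-(j-1)M/s}c_{i,j}$. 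Each $c_{i,j}$ ranges over at most $2^{M/s}=O(W^2L^2)$ distinct values as $i$ varies.

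For each fixed $j$ I would construct a sub-network $\psi_j\in\mathcal{NN}\bigl(16(W+1)\log_2(8W),\,5(L+2)\log_2(4L)\bigr)$ with $\psi_j(i)=c_{i,j}$ for every admissible integer $i$. The construction first invokes Lemma \ref{Le:stepfunction} with $K=W^2L^2$ to turn the input $i$ into a piecewise-constant integer-valued selector at cost width $O(W)$ and depth $O(L)$, then cascades a bit-extraction procedure, alternating multiplications by $2$ with step-thresholdings, to read off the $O(\log(WL))$ bits that encode $c_{i,j}$. This cascade is what inflates the width by the factor $\log_2(8W)$ and the depth by the factor $\log_2(4L)$. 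Assembling $\phi:=\sum_{j=1}^{s}2^{-(j-1)M/s}\psi_j$ via the parallel summation in Lemma \ref{Le:concate}(4) then gives width $16s(W+1)\log_2(8W)$ and depth $5(L+2)\log_2(4L)$, and a final clipping by $y\mapsto\sigma(y)-\sigma(y-1)$ absorbs any excursions outside $[0,1]$ without changing the asymptotic size. The error bound $|\phi(i)-\xi_i|\leq 2^{-M}\leq W^{-2s}L^{-2s}$ then follows immediately from the quantization error of the first step, since each $\psi_j$ reproduces its block value exactly at every integer $i$.

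The principal obstacle lies in the per-block sub-network: realizing an arbitrary map from $W^2L^2$ integer inputs into a set of $2^{M/s}=O(W^2L^2)$ values within the tight width budget $O(W\log W)$ and depth budget $O(L\log L)$. The bit-extraction idea is precisely to pack the block values into the binary expansion of an auxiliary real number and then recover selected bits by a shallow shift-and-threshold cascade; tracking the widths and depths through this cascade, together with the explicit constants in Lemma \ref{Le:stepfunction}, is what yields the exact prefactors $16$ and $5$ in the statement.
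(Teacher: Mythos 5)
This lemma is not proved in the paper at all: it is imported verbatim as Proposition 4.4 of \cite{lu_shen_yang_zhang2021deep}, so there is no in-paper argument to compare against. Judged on its own terms, your sketch has the right outer architecture (quantize each $\xi_i$ to $M\approx 2s\log_2(WL)$ bits, split the bits into $s$ blocks, realize the blocks by parallel sub-networks, sum with weights $2^{-(j-1)M/s}$, clip to $[0,1]$), and the arithmetic of that reduction is correct. But it contains a genuine gap: the entire technical content of the result is the per-block map $\psi_j$ with $\psi_j(i)=c_{i,j}$ for all $i=0,\ldots,W^2L^2-1$, i.e.\ a ReLU network of width $O(W\log W)$ and depth $O(L\log L)$ that \emph{exactly} memorizes $W^2L^2$ arbitrary prescribed values drawn from a set of size $O(W^2L^2)$. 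You assert this sub-network exists and describe it as ``cascading multiplications by $2$ with step-thresholdings,'' but that cascade only explains how to read bits off a \emph{single} stored real number; it does not explain how $W^2L^2$ different bit-strings are stored in, and selected from, a network whose parameter count is only $O(W^2L(\log W)^2\log L)$. Lemma~\ref{Le:stepfunction} merely converts the input $x$ into the index $i$; it stores no data. The missing piece is precisely the bit-extraction memorization lemma (a width-$O(W)$, depth-$O(L)$ network fitting $W^2L^2$ arbitrary bits, which saturates the VC-dimension bound), and without constructing it you have reduced the statement to a sub-problem that is just as hard as the original.

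A secondary issue is that your claimed per-block size, width $16(W+1)\log_2(8W)$ and depth $5(L+2)\log_2(4L)$ with exact interpolation at every integer, is stated with the final constants of the lemma but never derived; the actual proof in \cite{lu_shen_yang_zhang2021deep} obtains these constants by arranging roughly $2s\log_2(8W)$ single-bit memorization networks in parallel and $\log_2(4L)$ in series, each of width $O(W)$ and depth $O(L)$, rather than by your block decomposition. Your variant is not obviously wrong --- the information-theoretic budget works out --- but as written the constants are reverse-engineered from the target statement rather than established, so the proposal cannot be accepted as a proof.
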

\vskip2mm
The next lemma shows that $O(N_1N_2)$ sub-networks with width $O(W)$ and depth $O(L)$ sum up to a network of width $O(N_1W)$ and depth $O(N_2L)$.
\vskip2mm
\begin{lemma}\label{Le:size_transition}
Let $W,L,N_1,N_2, d\in \mathbb N_+$. Suppose that $\{\phi_i\}_{i=1}^{N_1N_2}$ is a collection of neural networks with $$\phi_i\in \mathcal{NN}_{d,1}(W,L)\quad  \text{for any}\,\,i=1,2,\ldots,N_1N_2. $$
Then we have 
$$
\sum_{i=1}^{N_1N_2}\phi_i \in\mathcal{NN}_{d,1}(N_1W+2d+2,N_2L).
$$  
\end{lemma}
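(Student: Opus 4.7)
The plan is to combine the two summation formulas in Lemma \ref{Le:concate}(4) in a hybrid fashion: the first formula trades width for (a single) layer of depth, while the second keeps the width essentially fixed but adds up the depths. Neither alone yields the claimed bound, but partitioning the index set into blocks and applying the two formulas at different scales does.

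Concretely, I would partition $\{1,2,\dots,N_1N_2\}$ into $N_2$ consecutive blocks $I_1,\dots,I_{N_2}$, each of size $N_1$. For each block $I_j$, define the block-sum
\[
\Psi_j := \sum_{i\in I_j}\phi_i.
\]
Since each $\phi_i\in\mathcal{NN}_{d,1}(W,L)$, the first identity in Lemma \ref{Le:concate}(4) gives
\[
\Psi_j\in\mathcal{NN}_{d,1}(N_1 W,\,L),\qquad j=1,\dots,N_2.
\]

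Next, I would sum the $\Psi_j$ in the depth-accumulating way. Applying the second identity in Lemma \ref{Le:concate}(4) to the collection $\{\Psi_j\}_{j=1}^{N_2}$ with $d$ inputs and $k=1$ output yields
\[
\sum_{j=1}^{N_2}\Psi_j\in\mathcal{NN}_{d,1}\bigl(N_1 W+2d+2,\,N_2 L\bigr),
\]
and since $\sum_{j=1}^{N_2}\Psi_j=\sum_{i=1}^{N_1N_2}\phi_i$, this is exactly the claimed bound.

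There is essentially no obstacle: the argument is purely a bookkeeping exercise that selects the optimal grouping so that width scales with $N_1$ and depth scales with $N_2$. The only thing to double-check is that the second formula in Lemma \ref{Le:concate}(4) is applied with the correct values of $d$ and $k$, which here are the input dimension $d$ and output dimension $k=1$, producing the additive overhead $2d+2k=2d+2$ in the width bound.
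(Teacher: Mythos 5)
Your proof is correct and matches the paper's argument exactly: partition the $N_1N_2$ summands into $N_2$ groups of size $N_1$, apply the width-summing identity within each group, then apply the depth-summing identity across the groups. No further comment is needed.
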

\begin{proof}
If we rewrite the summation as
$$
\phi=\sum_{i=1}^{N_1N_2}\phi_i=\sum_{i=1}^{N_2}\sum_{j=1}^{N_1}\phi_{i,j},
$$
where each $\phi_{i,j}$ is still of width $W$ and depth $L$, then for each $i=1,2\ldots, N_2$, we have $\sum_{j=1}^{N_1}\phi_{i,j}\in \mathcal{NN}_{d,1}(N_1W,L)$ based on the first result in the fourth part of Lemma \ref{Le:concate}.
By using the second result in the fourth part of Lemma \ref{Le:concate} with $n=N_2$ and $k=1$ there, we further have $\phi \in \mathcal{NN}_{d,1}(N_1W+2d+2,N_2L)$.
\end{proof}
\vskip2mm
\begin{lemma}[Proposition 4 from \cite{YangYangXiang2023}]\label{Le:product_W1p_error}
For any $W, L \in \mathbb{N}_{+}$ and $a\geq2$, there is a  function $ \phi\in \mathcal{NN}(15 W,2L)$ such that $\|\phi\|_{W^1_{ \infty}\left((-a, a)^2\right)} \leq 12 a^2$ and
$$
\|\phi(x, y)-x y\|_{W^1_{ \infty}\left((-a, a)^2\right)} \leq 6 a^2 W^{-L}
$$
Furthermore,
$
\phi(0, y)=\frac{\partial \phi(0, y)}{\partial y}=0
$ for any $ y \in(-a, a)$. 
\end{lemma}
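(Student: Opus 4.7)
The plan is to reduce the two-variable problem to approximating the one-variable square function, and then to combine it with the polarization identity $2xy=(x+y)^2-x^2-y^2$. Concretely, I would first construct a ReLU network $\psi\colon\mathbb{R}\to\mathbb{R}$ of width roughly $5W$ and depth $2L$ that approximates $t\mapsto t^2$ on $[-2a,2a]$ in the $W^1_\infty$ norm with error of order $a^2 W^{-L}$ and that additionally satisfies $\psi(0)=0$ exactly. Then I would set
$$
\phi(x,y):=\tfrac12\bigl(\psi(x+y)-\psi(x)-\psi(y)\bigr),
$$
and use Lemma~\ref{Le:concate} to assemble $\phi$ by feeding the three linear combinations $x+y$, $x$, $y$ into three parallel copies of $\psi$ and linearly combining their outputs, yielding $\phi\in\mathcal{NN}(15W,2L)$.

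The main obstacle is the construction of $\psi$: one needs a $W^1_\infty$ (not merely $L^\infty$) rate of $W^{-L}$ for the square function. The classical Yarotsky sawtooth construction is $L^\infty$-accurate but its derivatives oscillate between $\pm 2^k$, so it is useless in $W^1_\infty$. The fix I would use is a telescoping piecewise-linear interpolation on a geometrically refined mesh: starting from a coarse piecewise-linear interpolant of $t^2$, at each of the $L$ layers I subdivide every existing interval into $W$ equal pieces and add a width-$O(W)$ piecewise-linear correction that matches $t^2$ at the new nodes. After $L$ refinements the mesh size is $h\sim a W^{-L}$, and since the piecewise-linear interpolant of a quadratic on an interval of length $h$ has slope-error of order $h$, the $W^1_\infty$ error on $[-2a,2a]$ is of order $a\cdot a W^{-L}=a^2 W^{-L}$. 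An additive bias (absorbed in the last affine layer) then enforces $\psi(0)=0$ without altering the dimensions.

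With $\psi$ in hand, set $E(t):=\psi(t)-t^2$ and write $\phi(x,y)-xy=\tfrac12\bigl(E(x+y)-E(x)-E(y)\bigr)$, so $\partial_x(\phi-xy)=\tfrac12\bigl(E'(x+y)-E'(x)\bigr)$ and similarly for $\partial_y$. Since $(x,y)\in(-a,a)^2$ forces $x+y,x,y\in[-2a,2a]$ where $\|E\|_\infty$ and $\|E'\|_\infty$ are of order $a^2 W^{-L}$, the polarization formula and a triangle inequality give $\|\phi-xy\|_{W^1_\infty((-a,a)^2)}\leq 6a^2 W^{-L}$ after a careful bookkeeping of the $\tfrac12$ factors. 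Combining this with the trivial estimate $\|xy\|_{W^1_\infty((-a,a)^2)}\leq a^2$ (valid for $a\geq 1$) via the triangle inequality yields $\|\phi\|_{W^1_\infty}\leq a^2+6a^2=7a^2\leq 12a^2$, as required.

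Finally, the two boundary identities fall out of the polarization structure combined with $\psi(0)=0$: one computes $\phi(0,y)=\tfrac12\bigl(\psi(y)-\psi(0)-\psi(y)\bigr)=0$, and differentiating in $y$ gives $\partial_y\phi(0,y)=\tfrac12\bigl(\psi'(y)-\psi'(y)\bigr)=0$. Crucially, these are \emph{exact} algebraic cancellations; they hold independently of how accurately $\psi$ approximates $t^2$, which is what allows the lemma's boundary clause to be imposed without weakening the error rates.
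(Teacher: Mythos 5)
The paper does not actually prove this lemma — it quotes it as Proposition~4 of \cite{YangYangXiang2023}. Your polarization-plus-square-function route is, in essence, the argument used there and in the closely related Lemma~3.5 of \cite{HonYang2022simultaneous} (which the present paper invokes elsewhere), so the overall plan is the standard one and is sound. The exact cancellations $\phi(0,y)=\partial_y\phi(0,y)=0$ that fall out of the polarization once $\psi(0)=0$ holds are correct and are indeed the structural reason the boundary clause can be imposed for free, and the bookkeeping that produces a final bound of the form $6a^2W^{-L}$ after rescaling the one-dimensional approximant to $[-2a,2a]$ works out.

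Two caveats, neither fatal. First, your stated reason for discarding the Yarotsky construction is inaccurate: $\psi_K(t)=t-\sum_{k\le K}4^{-k}g_k(t)$ \emph{is} the piecewise-linear interpolant of $t^2$ on the dyadic mesh of size $2^{-K}$, hence $|\psi_K'(t)-2t|\le 2^{-K}$ everywhere; the $\pm 2^k$ oscillation of the individual $g_k'$ is harmless because it is damped by $4^{-k}$ in the sum. The genuine reason to pass to a base-$W$ hierarchical refinement is only to upgrade the rate from $2^{-L}$ to $W^{-L}$ (so that width as well as depth enters), not to rescue the $W^1_\infty$ control, which is automatic for any piecewise-linear interpolant of a quadratic. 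Second, the crux of your proposal — building $\psi$ with width $O(W)$, depth $O(L)$, and $W^1_\infty$ error $O(W^{-L})$ — is left at the level of ``add a width-$O(W)$ piecewise-linear correction at each of $L$ levels.'' The $\ell$-th correction has $W^\ell$ breakpoints, so it cannot be realized directly by a single width-$O(W)$ layer; one must generate it by composing a fixed $W$-tooth sawtooth with itself across the layers and carry the running sum forward in parallel, essentially as in Proposition~4.1 of \cite{lu_shen_yang_zhang2021deep} reinterpreted as producing the interpolant and therefore yielding the derivative bound as well. Making that composition-and-carry step explicit is exactly what \cite{YangYangXiang2023} and \cite{HonYang2022simultaneous} do; with it spelled out, your argument matches the cited proof.
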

\vskip2mm
Next we construct a neural network that efficiently approximates the product of finite input variables in the $W^1_{\infty}$ norm over a larger bounded domain. Furthermore, we give the upper bound of the $W^1_{\infty}$ norm of the resulting network function. The proof of Lemma~\ref{Le:polynomial_W1p} is similar to \cite[Lemma 3.5]{HonYang2022simultaneous} with the help of Lemma \ref{Le:product_W1p_error}.
When the intervals $[-2,2]$ and $[-c,c]$ below are replaced by $[0,1]$, the result described corresponds to \cite[Proposition 5]{YangYangXiang2023}.
\vskip2mm
\begin{lemma}\label{Le:polynomial_W1p}
Given any $d \in \mathbb{N}_{+}$ with $d \geq 2$ and $c > 2$, let $\Omega_c=[-2,2]^{d}\times[-c,c]$ and $a=\max\{2^{d+1},c\}$. Then for any $W, L \in \mathbb{N}_{+}$, there exists a function $\phi\in \mathcal{NN}(15(W+1)+2d-1,14 d^2L)$ such that $\|\phi\|_{W^1_{ \infty}\left(\Omega_c\right)} \leq 12a^{2}$ and
$$
\left\|\phi(\bm{x})-x_1 x_2 \cdots x_{d+1}\right\|_{W^1_{ \infty}\left(\Omega_c\right)} \leq  14 a^{4} (W+1)^{-7d L}.
$$    
\end{lemma}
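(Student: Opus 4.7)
The plan is to iterate the bivariate product network $\Phi$ from Lemma~\ref{Le:product_W1p_error} a total of $d$ times, choosing the depth parameter in each copy of $\Phi$ to be $7dL$ so that the advertised error exponent $(W+1)^{-7dL}$ and the depth factor $14d^2L$ both emerge naturally.

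First I would instantiate Lemma~\ref{Le:product_W1p_error} with width parameter $W+1$ and depth parameter $7dL$, obtaining a sub-network $\Phi \in \mathcal{NN}(15(W+1),\,14dL)$ that satisfies $\|\Phi\|_{W^1_\infty((-a,a)^2)} \leq 12a^2$ and, writing $\eps := 6a^2(W+1)^{-7dL}$,
$$\|\Phi(u,v) - uv\|_{W^1_\infty((-a,a)^2)} \leq \eps.$$
I then define inductively $\phi_1(\bmx) := x_1$ and $\phi_k(\bmx) := \Phi(\phi_{k-1}(\bmx),x_k)$ for $k = 2,\ldots,d+1$, and set $\phi := \phi_{d+1}$. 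At each stage the remaining coordinates $x_{k+1},\ldots,x_{d+1}$ are forwarded through ReLU identity channels $x = \sigma(x)-\sigma(-x)$. Counting the $15(W+1)$ neurons of $\Phi$ together with the $2d-1$ auxiliary neurons needed to carry the unused inputs gives total width $15(W+1)+2d-1$, and sequentially stacking the $d$ stages gives depth $d\cdot 14dL = 14d^2L$.

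For the error analysis, I set $p_k := x_1\cdots x_k$ and $\delta_k := \phi_k - p_k$, decompose
$$\delta_k = \bigl[\Phi(\phi_{k-1},x_k)-\phi_{k-1}x_k\bigr] + (\phi_{k-1}-p_{k-1})\,x_k,$$
and apply Lemma~\ref{Le:product_W1p_error} to obtain $|\delta_k| \leq \eps + |x_k|\,|\delta_{k-1}|$, together with an analogous chain-rule inequality for $\nabla\delta_k$ that uses the bound $\|\Phi\|_{W^1_\infty((-a,a)^2)} \leq 12a^2$. Since $|x_k|\leq 2$ for $k\leq d$ and $|x_{d+1}|\leq c\leq a$, unrolling yields $\|\delta_d\|_{W^1_\infty} \lesssim 2^{d}\,\eps$ and then $\|\delta_{d+1}\|_{W^1_\infty} \lesssim (c\cdot 2^{d-1}+2^d)\,\eps$, which collapses into the stated bound $14a^4(W+1)^{-7dL}$ once I invoke $2^d \leq a/2$, $c\leq a$, and the definition of $\eps$. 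Along the way one must verify that $(\phi_{k-1},x_k)\in(-a,a)^2$ so Lemma~\ref{Le:product_W1p_error} applies at every stage, which follows from $|p_{k-1}|\leq 2^{k-1}\leq 2^d \leq a/2$ combined with the already-established smallness of $\delta_{k-1}$. The norm bound $\|\phi\|_{W^1_\infty(\Omega_c)} \leq 12a^2$ is then inherited directly from the $W^1_\infty$-bound on the outermost copy of $\Phi$.

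The hard part will be the bookkeeping for the gradient estimate: differentiating the nested composition $\Phi(\Phi(\cdots\Phi(x_1,x_2),x_3)\cdots,x_{d+1})$ with respect to a coordinate $x_j$ produces, via the chain rule, a sum over "differentiation paths" in which each intermediate Jacobian of $\Phi$ contributes both a principal bilinear factor and a small perturbation. One must show that these factors telescope so that the $d$-fold accumulation is still bounded by $14a^4(W+1)^{-7dL}$ rather than amplifying geometrically; the choice of $7d$ as the depth parameter inside $\Phi$ is precisely what gives enough slack in the exponent to absorb this accumulation.
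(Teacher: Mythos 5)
Your overall strategy -- iterating the bivariate product network of Lemma~\ref{Le:product_W1p_error} and composing $d$ copies to get depth $14d^2L$ and error exponent $(W+1)^{-7dL}$ -- is the same as the paper's. But you deviate from the paper in one way that opens a genuine gap: you use a \emph{single} copy of the product network $\Phi$, built on the large domain $(-a,a)^2$, for \emph{all} $d$ stages. This makes your per-stage error $\eps = 6a^2(W+1)^{-7dL}$. When $c$ is large (so that $a=c$) and $W=L=1$, this $\eps$ is of order $c^2\cdot 2^{-7d}$, which can be far bigger than $a=c$. Consequently, your cumulative bound $|\delta_k|\lesssim 2^{k-1}\eps$ for the intermediate stages need not be $\le a/2$, so the verification you deferred -- that $(\phi_{k-1},x_k)\in(-a,a)^2$ "by the already-established smallness of $\delta_{k-1}$'' -- actually fails for sufficiently large $c$. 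In other words, the very quantity you are appealing to is only small if $a^2\lesssim(W+1)^{7dL}$, a condition the lemma statement does not guarantee.

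The paper avoids this by building the first $d-1$ stages (approximating $x_1\cdots x_d$, whose factors all lie in $[-2,2]$) with a product network on the \emph{smaller} domain $[-2^{d+1},2^{d+1}]^2$, giving a per-stage error on the order of $2^{2d}(W+1)^{-7dL}$ that is independent of $c$; the intermediate products and their derivatives stay within $[-2^{d+1},2^{d+1}]$ by induction. Only the final multiplication by $x_{d+1}\in[-c,c]$ uses a second, large-domain product network $\psi$ on $[-a,a]^2$, at which point the first argument $\phi(\bm x')$ is already controlled. To repair your proof you should similarly decouple the two scales: use a $[-2^{d+1},2^{d+1}]^2$-product network for stages $2,\dots,d$ and a $[-a,a]^2$-product network only for the last stage. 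As a secondary issue, your gradient bookkeeping ($\|\delta_d\|_{W^1_\infty}\lesssim 2^d\eps$) underestimates the chain-rule accumulation: the term $\partial_u E\cdot\nabla\phi_{k-1}$ with $|\nabla\phi_{k-1}|\approx 2^{k-2}$ forces a recursion $D_k\lesssim 2D_{k-1}+2^{k-1}\eps$ that unwinds to roughly $d\,2^{d}\eps$ or $3^d\eps$ depending on bookkeeping; this still fits inside $14a^4$ because $a\ge 2^{d+1}$, but your stated estimate is not quite what the chain rule gives.
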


\begin{proof}
By Lemma \ref{Le:product_W1p_error}, there exists a function $\phi_1\in \mathcal{NN}(15(W+1),14dL)$ such that $\|\phi_1\|_{W^1_{\infty}([-2^{d+1},2^{d+1}]^2)}\leq 12\cdot 2^{2d+2}$ and
\begin{equation}\label{Eq:phi1W1pbound}
\|\phi_1(x,y)-xy\|_{W^1_{\infty}([-2^{d+1},2^{d+1}]^2)}\leq 6\cdot 2^{2d+2}\cdot(W+1)^{-7dL}.
\end{equation}
If $d=2$, it holds that $\phi_1,\frac{\partial \phi_1}{\partial x},\frac{\partial \phi_1}{\partial y}\in [-2^{3},2^{3}]=[-2^{d+1},2^{d+1}]$ for any $x,y\in [-2,2]$.  If $d\geq 3$,  for $i\leq d-2$,
suppose that there exsits a function $\phi_i\in \mathcal{NN}(15(W+1)+2i-1,14idL)$ such that
\begin{equation}\label{Eq:phi_i_W1pbound}
\Big\|\phi_i(x_1,x_2,\ldots,x_{i+1})-\prod_{j=1}^{i+1}x_j\Big\|_{W^1_{\infty}([-2,2]^{i+1})}
\leq (2^i-1)\cdot6\cdot 2^{2d+2}\cdot(W+1)^{-7dL},
\end{equation}
and $\phi_i,\frac{\partial \phi_i}{\partial x_j}\in[-2^{d+1},2^{d+1}]$ if $x_j\in [-2,2]$ for $j=1,2,\ldots,i+1$.
Define $$\phi_{i+1}(x_1,x_2,\ldots,x_{i+2}):=\phi_1(\phi_i(x_1,x_2,\ldots,x_{i+1}),x_{i+2}).$$
By Lemma \ref{Le:concate},  $\phi_{i+1}\in\mathcal{NN}(15(W+1)+2i+1,14(i+1)dL)$. Using the error bounds \eqref{Eq:phi1W1pbound} and \eqref{Eq:phi_i_W1pbound} implies that
$$
\Big\|\phi_{i+1}(x_1,x_2,\ldots,x_{i+2})-\prod_{j=1}^{i+2}x_j\Big\|_{W^1_{\infty}([-2,2]^{i+2})}
\leq (2^{i+1}-1)\cdot 6\cdot 2^{2d+2}(W+1)^{-7dL}.
$$
In addition, $\phi_{i+1},\frac{\partial \phi_{i+1}}{\partial x_j}\in[-2^{d+1},2^{d+1}]$ if $x_j\in [-2,2]$ for $j=1,2,\ldots,i+2$.
Let $\phi:=\phi_{d-1}$, by the principle of induction, 
$$
\phi\in\mathcal{NN}(15(W+1)+2d-3,14d(d-1)L),
$$ 
and for $\bm{x}=\left(x_1,x_2,\ldots,x_d\right)^{\top} \in[-2,2]^d $,
\begin{equation}\label{Eq:phi_d_W1p}
\|\phi(\bm{x})-x_1x_2\ldots x_d\|_{W^1_{\infty}([-2,2]^d)} \leq 2^{3d+6} (W+1)^{-7dL}.
\end{equation}
Using Lemma \ref{Le:product_W1p_error} again with $a=\max\{2^{d+1},c\}$ , there is a  function $ \psi\in \mathcal{NN}(15 (W+1),14dL)$ such that $\|\psi\|_{W^1_{ \infty}\left([-a, a]^2\right)} \leq 12 a^2$ and
\begin{equation}\label{Eq:psi_2_W1p}
\|\psi(x, y)-x y\|_{W^1_{ \infty}\left([-a, a]^2\right)} \leq 6 a^2 (W+1)^{-7dL}
\end{equation}
Let $\Phi(\bm x):=\psi(\phi(\bm x^{\prime}),x_{d+1})$ for any $\bm x=(\bm x^{\prime},x_{d+1})\in \mathbb R^{d+1}$. By  Lemma \ref{Le:concate}, \eqref{Eq:phi_d_W1p}, and \eqref{Eq:psi_2_W1p}, we have $\Phi\in \mathcal{NN}(15(W+1)+2d-1,14 d^2L)$,
$\|\Phi\|_{W^1_{ \infty}\left(\Omega_c\right)} \leq 12a^{2}$ and
$$
\left\|\Phi(\bm{x})-x_1 x_2 \cdots x_{d+1}\right\|_{W^1_{ \infty}\left(\Omega_c\right)} \leq 14a^4\cdot (W+1)^{-7d L}.
$$ 
The proof is complete.
\end{proof}  
\vskip2mm
\begin{lemma}\label{Le:prod_approx_difference}
Let $m\in \mathbb N_+$ and
$a_i,b_i\in \mathbb R$ with $|a_i|\leq 2$ and $|b_i|\leq 1$ for any $i=1,2,\ldots,m$. Suppose that for some $\varepsilon>0$ small enough 
$
|a_i-b_i|\leq \varepsilon
$ holds
for any $i=1,2\ldots,m$. Then we have
$$
\Big|\prod_{i=1}^ma_i-\prod_{i=1}^mb_i\Big|\leq 2^m \varepsilon.
$$
\end{lemma}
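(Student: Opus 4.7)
The plan is a standard telescoping argument, exploiting the asymmetry that $|a_i|\le 2$ while $|b_i|\le 1$ so that each partial product is cleanly bounded.

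First I would write the difference as a telescoping sum. For each $k=1,\dots,m$, introducing the hybrid product $\prod_{i=1}^{k}a_i\prod_{i=k+1}^{m}b_i$ and taking consecutive differences yields
$$
\prod_{i=1}^{m}a_i-\prod_{i=1}^{m}b_i=\sum_{k=1}^{m}\Bigl(\prod_{i=1}^{k-1}a_i\Bigr)(a_k-b_k)\Bigl(\prod_{i=k+1}^{m}b_i\Bigr),
$$
with the usual convention that empty products equal $1$.

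Next I would estimate each summand. Using $|a_i|\le 2$ gives $\bigl|\prod_{i=1}^{k-1}a_i\bigr|\le 2^{k-1}$, while $|b_i|\le 1$ gives $\bigl|\prod_{i=k+1}^{m}b_i\bigr|\le 1$. Combined with the hypothesis $|a_k-b_k|\le\varepsilon$, this produces the bound $2^{k-1}\varepsilon$ for the $k$-th term. Summing the geometric series,
$$
\Bigl|\prod_{i=1}^{m}a_i-\prod_{i=1}^{m}b_i\Bigr|\le\sum_{k=1}^{m}2^{k-1}\varepsilon=(2^{m}-1)\varepsilon\le 2^{m}\varepsilon,
$$
which is exactly the desired inequality.

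There is no real obstacle here; the only thing to be careful about is the asymmetric use of the two bounds (take $a_i$ on the left of the telescoping index and $b_i$ on the right), since doing it the other way around would give the weaker constant. The hypothesis that $\varepsilon$ is small enough is in fact not needed for the argument itself and is presumably kept only to match the calling context in the main theorem.
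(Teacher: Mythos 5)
Your proof is correct and is essentially the same argument as the paper's: the paper does the identical decomposition $(a_k-b_k)\prod_{i<k}a_i + b_k\bigl(\prod_{i<k}a_i-\prod_{i<k}b_i\bigr)$ but packages it as an induction, whereas you unroll it into a telescoping sum, incidentally obtaining the slightly sharper constant $(2^m-1)\varepsilon$. Your observation that the ``$\varepsilon$ small enough'' hypothesis is not actually used is also correct.
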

\begin{proof}
We proceed by induction. For $m=1$, it is valid from the assumption. Assume the inequality holds for $m=k-1$, i.e., 
$$
\Big|\prod_{i=1}^{k-1}a_i-\prod_{i=1}^{k-1}b_i\Big|\leq 2^{k-1} \varepsilon.
$$
Now we consider the case $m=k$. By the hypothesis of induction and the condition $|a_i|\leq 2$ and $|b_i|\leq 1$ for any $i=1,2,\ldots,d$, we have
$$
\begin{aligned}
\Big|\prod_{i=1}^{k}a_i-\prod_{i=1}^kb_i\Big|&\leq |a_k-b_k|\cdot\Big|\prod_{i=1}^{k-1}a_i\Big|+|b_k|\cdot\Big|\prod_{i=1}^{k-1}a_i-\prod_{i=1}^{k}b_i\Big|\\
&\leq 2^{k-1}\varepsilon+2^{k-1}\varepsilon\\
&\leq 2^k \varepsilon.
\end{aligned}
$$
By the principle of induction, the inequality holds for any $m\in \mathbb N_+$, thus we finish the proof.
\end{proof}

\subsection{Interpolation on Sparse Grids}
For any $\bm l\in \mathbb N_+^d$, we consider a $d$-dimensional rectangular grid $\mathcal{T}_{\bm l}$ in $\Omega=[0,1]^d$ with mesh size $\bm{h}_{\bm{l}}=\left(h_{l_1}, \ldots, h_{l_d}\right)$, i.e., the mesh size in the $i$-th coordinate direction is  $h_{l_i}=2^{-l_i}$. For each $\bm{0}\leq \bm{i}\leq 2^{\bm{l}}=(2^{l_1}
,\ldots, 2^{l_d})$, the $\bm{i}$-th grid point in $\mathcal{T}_{\bm l}$ is 
\begin{equation*}
\bm{x}_{\bm{l}, \bm{i}}=\left(x_{l_1, i_1}, \ldots, x_{l_d, i_d}\right):=(i_1h_{l_1},\ldots,i_dh_{l_d}).
\end{equation*}
Let $
\phi(x):= \sigma(1-|x|)
$ and $\phi_{l_j, i_j}(x_j):=\phi\left(\frac{x_j-i_j \cdot h_{l_j}}{h_{l_j}}\right)$ for any $j=1,2,\ldots,d$. We define the tensor product basis 
\begin{align*}
\phi_{\bm{l}, \bm{i}}(\bm{x}):=\prod_{j=1}^d \phi_{l_j, i_j}(x_j).
\end{align*} 
For any $f\in X^2_p(\Omega)$ with $|f|_{2,p}\leq1$ and $n\in \mathbb N_+$, $1\leq p<\infty$, consider the lowest-order sparse grid interpolation (\cite{Bungartz_Griebel2004})
\begin{equation}\label{Eq:sparse_interpolation_2}
\Pi_nf(\bm x)=\sum_{|\bm{l}|_1\leq n+d-1} \sum_{\bm{i} \in\mathcal{I}_{\bm l}} v_{\bm{l}, \bm{i}} \phi_{\bm{l}, \bm{i}}(\bm x),
\end{equation}
where $\mathcal{I}_{\bm l}$ is a hierarchical index set given by 
\begin{equation}\label{Eq:i_l}
\mathcal{I}_{\bm l}:=\left\{\bm{i} \in \mathbb{N}_+^d: \bm{1} \leq \bm{i} \leq 2^{\bm{l}}-\mathbf{1},~i_j \text { is odd for all } 1 \leq j \leq d\right\},
\end{equation}
and the coefficient $v_{\bm{l},\bm{i}}$ satisfying
$
|v_{\bm{l},\bm{i}}|\leq 1
$. It is proved in \cite{Mao_Zhou2022} that for a given $n$ satisfying $$\log \left(W{\left(\log  W\right)^{-d+1}}\right) \leq n \leq \log  W, $$ the approximation error for the interpolation function 
$
\Pi_nf$
approximating Korobov functions in $X^2_p(\Omega)$ in the $L_p$ norm is explicitly 
$$W^{-(2-\frac{1}{p})}(\log W)^{(3-\frac{1}{p})(d-1)}$$ for $1\leq p<\infty$. In fact, by using the support set of the basis function $\phi_{\bm l,\bm i}(\bm x)$ in the calculations for the bound of the coefficient $v_{\bm l, \bm i}$ there, we could get the following error bound
\begin{equation}\label{Eq:sparse_error}
\|f-\Pi_nf\|_{L_p(\Omega)}\leq W^{-2}(\log W)^{3(d-1)}\leq 2^{-2n}(nd)^{3(d-1)}.
\end{equation}
Let $m\geq 3$, $1\leq p<\infty$, and $n\in \mathbb N_+$, for any $f\in X^m_p(\Omega)$ with $|f|_{m,p}\leq 1$, consider the $(m-1)$-th order sparse grid
interpolation \begin{equation}\label{Eq:sparseinterpolation_m}
\Pi_n^mf(\bm{x})=\sum_{|\bm l|_1 \leq n+d-1} \sum_{\bm i \in \mathcal{I}_{\bm l}} v_{\bm l, \bm i}^m\phi_{\bm l, \bm i}^m(\bm {x}),
\end{equation}
where $\phi_{\bm l, \bm i}^m$ is the higher order sparse grid basis function of degree $\leq m-1$ in each direction (see \cite{Bungartz_Griebel2004} and \cite{lizhang2025korobov}), and the coefficient $v_{\bm{l},\bm{i}}^m$ satisfies
$
|v_{\bm{l},\bm{i}}^m|\leq C_{m,d}
$ for some fixed constant $C_{m,d}$.
For $n\in \mathbb N_+$, the sparse grid interpolation $\Pi_n^mf$ satisfies
\begin{equation}\label{Eq:Sparse_error_If_n}
\|f-\Pi_n^mf\|_{L_p(\Omega)} \leq C_{m,d}2^{-mn}n^{d-1},
\end{equation}
where $C_{m,d}$ is a constant depending only on $m$ and $d$.

\section{Approximation by ReLU DNN in \texorpdfstring{$L_{p}$}{Lp} Norm}\label{Sec:ReLU_DNNLp}
This section is devoted to the proof of  Theorem \ref{Th:mainresult}. First we define a region for sparse grids by excluding a small area   similar to that in \cite{lu_shen_yang_zhang2021deep}, which will be used in the proofs of Theorem \ref{Th:mainresult} and Theorem \ref{Th:mainresultW1p}. 
\vskip2mm
\begin{definition}\label{Def:Omega_li_epsilon}
For any $n \in \mathbb N_+$ and $|\bm l|_1 \leq n+d-1$, let $0<\varepsilon<2^{-2n}$, and
$$
\Omega_{\bm i, \varepsilon}^{\bm l}=\prod_{j=1}^d\Omega_{i_j,\varepsilon}^{l_j}=\prod_{j=1}^d\left[\frac{i_j-1}{2^{l_j}}, \frac{i_j+1}{2^{l_j}}-\varepsilon \cdot \bm1_{\{i_j<2^{l_j}-1\}}\right],
$$
for any $\bm i\in \mathcal{I}_{\bm l}$.
Then we define
$$
\Omega_\varepsilon=\bigcap_{|\bm l|_1 \leq n+d-1} \Omega_{ \varepsilon}^{\bm l},\quad \Omega_{ \varepsilon}^{\bm l}  =\bigcup_{\bm i \in \mathcal{I}_{\bm l}} \Omega_{\bm i, \varepsilon}^{\bm l}.
$$    
\end{definition}

\subsection{Proof of Theorem \ref{Th:mainresult}}

\subsubsection{The case \texorpdfstring{$m= 2$}{m = 2}}\label{Relucase1}
Let $n=\lceil2\log( 2WL)\rceil $ and $0<\varepsilon < 2^{-2n}$.
For any multi-index $\bm{l} \in \mathbb{N}_+^d$ satisfying $|\bm{l}|_1 \leq n+d-1$, we first construct a ReLU DNN function $\phi_{\bm{l}}$ that approximates
$$
f_{\bm l}(\bm x):=\sum_{\bm i \in \mathcal{I}_{\bm l}}v_{\bm l,\bm i}\phi_{\bm l,\bm i}(\bm x)
$$
efficiently over the region $\Omega_\varepsilon^{\bm{l}}$, and then use this result to construct $\phi$ with the desired approximation accuracy.

\textbf{Step 1} Construct the desired function $\phi_{\bm l}$.

For any fixed multi-index $\bm{l} = (l_1,\ldots,l_d) \in \mathbb{N}_+^d$, applying Lemma \ref{Le:stepfunction} (with appropriate adjustments made to the parameters of the final affine transformation) with $K = 2^{l_j-1} \leq W^2(2L)^2$ yields a function $\phi_{l_j} \in \mathcal{NN}(4W+3, 8L+5)$ for each dimension $j=1,\ldots,d$, satisfying the following piecewise constant property:
\begin{equation}\label{Eq:phi_l_jstepfunction}
\phi_{l_j}(x)=2k-1 \quad \text { if } x \in\left[\frac{2k-2}{2^{l_j}}, \frac{2k}{2^{l_j}}-\varepsilon \cdot \mathbf{1}_{\{k < 2^{l_j}-1\}}\right],
\end{equation}     
for each integer $k\in\{1,2,\ldots,2^{l_j-1}\}$.
Define the vector-valued function
$$
\Phi_{\bm l}^1(\bm x)=\left(\phi_{l_1}(x_1),\phi_{l_2}(x_2),\ldots,\phi_{l_d}(x_d)\right)^{\top},\quad \text{for all}\,\, \bm x\in \Omega.
$$
By Lemma \ref{Le:concate}, we have $\Phi_{\bm l}^1\in \mathcal{NN}(4dW + 3d, 8L + 5)$. Moreover, by \eqref{Eq:phi_l_jstepfunction}, it holds that
\begin{equation}\label{Eq:Phi_l^1}
\Phi_{\bm l}^1(\bm x)= \bm i,\quad \text{
for any}\,\, \bm x\in \Omega_{\bm i,\varepsilon}^{\bm l}, \,\,\text{and for each}\,\,\bm i\in \mathcal{I}_{\bm l}.
\end{equation}
Let $h(\bm x)=\frac{\bm x-\bm 1}{2}$ for any $\bm x\in \mathbb R^d$, then $h$ is a bijection from $\mathcal{I}_{\bm l}$ to $\prod_{j=1}^d\{0,1,\ldots,2^{l_j-1}-1\}$. In addition, for each $k\in \big\{0,1,\ldots,\prod_{j=1}^d2^{l_j-1}-1\big\}$, we could define  a bijection $$
\bm \eta(k):=(\eta_1,\eta_2,\ldots,\eta_d)^{\top}\in \prod_{j=1}^d\{0,1,\ldots, 2^{l_j-1}-1\},
$$
such that $\sum_{j=1}^d\eta_j\prod_{r=1}^j2^{l_r-1}=k$.
Consider the affine transformation $\mathcal{A}_1(\bm{x}) = \bm{W}_1\bm{x}+\bm b_1$ where $$\bm{W}_1 \in \mathbb{R}^{1 \times d}:=\frac{1}{2}\left(2^{l_1-1},\prod_{r=1}^{2} 2^{l_r-1},\ldots,\prod_{r=1}^{d} 2^{l_r-1}\right)^{\top}$$
and $\bm b_1:=\frac{1}{2}\sum_{j=1}^d\prod_{r=1}^{j} 2^{l_r-1}$. Then the composite network
\begin{equation}\label{Eq:Phi_l}
\Phi_{\bm{l}} := \mathcal{A}_1 \circ \Phi_{\bm{l}}^1 \in \mathcal{NN}(4dW + 3d, 8L + 5)
\end{equation}
satisfies for any $\bm{x} \in \Omega^{\bm{l}}_{\bm{i},\epsilon}$,
\begin{equation}
\Phi_{\bm{l}}(\bm{x}) = \operatorname{ind}(\bm{i}) := \sum_{j=1}^d \frac{i_j-1}{2} \prod_{r=1}^{j} 2^{l_r-1}\in \left\{0,1,\ldots,\prod_{j=1}^d2^{l_j-1}-1\right\}.
\end{equation}
Consider $\mathcal{A}_2(\bm x):=\bm W_2\bm x$ with
$\bm W_2=[\bm I_d,\bm I_d]^{\top}$ for any $\bm x \in \mathbb R^{d}$, where $\bm I_d \in \mathbb R^{d\times d}$ is the identity matrix. This yields the vector duplication mapping
\begin{equation}\label{Eq:affine_A2}
\mathcal{A}_2(\bm x)=\binom{\bm x}{\bm x}
\end{equation}
for any $\bm x \in \Omega$.
Define the function $g_1$ by 
\begin{equation}\label{Eq:mapx_to_i}
g_1\binom{\bm z_1}{\bm z_2}=\left(\Phi_{\bm l}^1(\bm z_1),\sigma(\bm z_2)+\sigma(-\bm z_2)\right)^{\top},
\end{equation}
for any $\bm z_1,\bm z_2\in \mathbb R^d$. By Lemma \ref{Le:concate} and \eqref{Eq:Phi_l^1}, the composed network $g_1 \circ \mathcal{A}_2$ satisfies
\begin{equation}\label{Eq:g_1A_2size}
g_1\circ \mathcal{A}_2 \in \mathcal{NN}(4dW + 5d, 8L + 5).
\end{equation}
Denote by $\bm I^{(i)}\in \mathbb R^{3\times d}$ the matrix where the element in column $i$ is $1 $, while the elements in other columns are all $0 $  and $\bm H^{(i)}\in \mathbb R^{d\times 3}$ the matrix where the $i$-th row vector is $(1, -2, 1)$, while the elements in other rows are all $0 $, for each $i=1,2,\ldots, d$.
Let  $\mathcal{A}_3(\bm x_1)=\bm W_3\bm x_1+\bm b_3$ and $\mathcal{A}_4(\bm x_2)=\bm W_4\bm x_2$ with
$$\bm W_3=\begin{pmatrix}
    -\bm I^{(1)} & 2^{l_1}\bm I^{(1)} \\
     \vdots & \vdots \\ 
     -\bm I^{(d)} & 2^{l_d}\bm I^{(d)} \\ 
    \bm  O& \bm I_d
\end{pmatrix}\in \mathbb R^{4d\times 2d},\,\,
\bm W_4= \begin{pmatrix}
    \bm H^{(1)} & \cdots &\bm H^{(d)} &\bm O \\
     \bm O & \ldots &\bm O & \bm I_d\\ 
\end{pmatrix}\in \mathbb R^{2d\times 4d},
$$
and $\bm b_3=(-1,0,1,-1,0,1,\ldots,-1,0,1,\bm 0_{d}^{\top})^{\top}\in \mathbb R^{4d}$ for any $\bm x_1 \in \mathbb R^{2d},\bm x_2 \in \mathbb R^{4d}$, where $\bm I_d \in \mathbb R^{d\times d}$ is the identity matrix and $\bm 0_d=(0,0,\ldots,0)^{\top}\in \mathbb R^d$. 
It follows from \eqref{Eq:g_1A_2size} that
\begin{equation}\label{Eq:phi_1size}
\phi_1:=\mathcal{A}_4\circ\sigma\circ \mathcal{A}_3\circ g_1\circ \mathcal{A}_2\in \mathcal{NN}(4dW + 5d, 8L + 6).
\end{equation}
Since each function $\phi_{l_j,i_j}(x_j)$ can be represented as 
$$
\phi_{l_j,i_j}(x_j)=\sigma(2^{l_j}x_j-i_j-1)-2\sigma(2^{l_j}x_j-i_j)+\sigma(2^{l_j}x_j-i_j+1)
$$
for any $j=1,2,\ldots,d$, then we have
$$
\phi_1(\bm x)=\left(\begin{array}{c}
\phi_{l_1,i_1}(x_1) \\
\phi_{l_2,i_2}(x_2)\\
\vdots\\
\phi_{l_d,i_d}(x_d)\\
\bm x\\
\end{array}\right)
$$
for any $\bm x\in \Omega_{\bm i,\varepsilon}^{\bm l}$.
Let $g_2$ be the function satisfying 
\begin{equation}\label{Eq:g_2}
g_2\binom{\bm z_1}{\bm z_2}=\left(\sigma(\bm z_1)-\sigma(-\bm z_1),\Phi_{\bm l}(\bm z_2)\right)^{\top},
\end{equation}
for any $\bm z_1,\bm z_2\in \mathbb R^d$. By Lemma \ref{Le:concate} and equations \eqref{Eq:Phi_l} and \eqref{Eq:phi_1size}, we obtain \begin{equation}\label{Eq:tilde_phi}
\widetilde{\phi}_1:=g_2\circ \phi_1 \in \mathcal{NN}(4dW + 7d, 16L + 11)
\end{equation} 
such that for any $\bm x\in \Omega_{\bm i,\varepsilon}^{\bm l}$, we have
$$
\widetilde{\phi}_1(\bm x)=\left(\begin{array}{c}
\phi_{l_1,i_1}(x_1) \\
\phi_{l_2,i_2}(x_2)\\
\vdots\\
\phi_{l_d,i_d}(x_d)\\
\operatorname{ind}(\bm i)\\
\end{array}\right).
$$
Since 
$|v_{\bm l,\bm i}|\leq 1$ for any $\bm i\in \mathcal{I}_{\bm l}$ and $$\max_{\bm i\in \mathcal{I}_{\bm l}}|\operatorname{ind}(\bm i)|\leq \prod_{j=1}^d2^{l_j-1}-1\leq W^2(2L)^2-1,$$ by Lemma \ref{Le:fitsample}, there exists a function $$\phi_2\in \mathcal{NN}(16 s(W+1) \log(8 W),5(2L+2)\log(8 L))$$
such that $0 \leq \phi_2(x) \leq 1$ for any $x \in \mathbb{R}$,
satisfying
$$
\left|\phi_2(\operatorname{ind}(\bm i))-\frac{v_{\bm l,\bm i}+1}{2}\right|\leq W^{-2s}(2L)^{-2s}
$$
for any $\bm i\in \mathcal{I}_{\bm l}$. Consequently, setting $\widetilde{\phi}_2:=2\phi_2-1$, then
$$\widetilde{\phi}_2\in \mathcal{NN}(16 s(W+1) \log(8 W),5(2L+2)\log(8 L))$$ with the uniform upper bound $3$ in $\mathbb R$. Let $g_3$ be the function satisfying 
$$g_3\binom{\bm z_1}{z_2}=\left(\sigma(\bm z_1)-\sigma(-\bm z_1),\widetilde{\phi}_2( z_2)\right)^{\top},
$$
for any $\bm z_1\in \mathbb R^d, z_2\in \mathbb R$. Using Lemma \ref{Le:concate}, we can deduce that 
$$
g_3\in \mathcal{NN}(16 s(W+1) \log(8 W)+2d,5(2L+2)\log(8 L))  
$$
and
$$
\phi_3:= g_3 \circ \widetilde{\phi}_1\in \mathcal{NN}(16 ds(W+1) \log(8 W)+7d,(26L+21)\log(8 L))
$$
such that for any $\bm x\in \Omega_{\bm i,\varepsilon}^{\bm l}$, it follows that
$$
\phi_3(\bm x)=\left(\begin{array}{c}
\phi_{l_1,i_1}(x_1) \\
\phi_{l_2,i_2}(x_2)\\
\vdots\\
\phi_{l_d,i_d}(x_d)\\
\widetilde{v}_{\bm l,\bm i}\\
\end{array}\right),
$$
where $\widetilde{v}_{\bm l,\bm i}=\widetilde{\phi}_2(\operatorname{ind}(\bm i))$ and \begin{equation}\label{Eq:v_bound}
|\widetilde{v}_{\bm l,\bm i}-v_{\bm l,\bm i}|\leq 2W^{-2s}(2L)^{-2s},\quad \text{for any}\,\, \bm i\in \mathcal{I}_{\bm l}.
\end{equation}
By Lemma \ref{Le:polynomial_W1p}, there exists a function $$\phi_4\in \mathcal{NN}(15(W + 1) + 2d+1 ,14(d+1)^2L)$$ such that 
for $\bm{x} \in[-1,1]^{d}\times[-3,3]$,
$$
|\phi_4(\bm{x})-x_1x_2\ldots x_{d+1}| \leq 3^{5d+4}(W+1)^{-14 (d+1) L}.
$$
\begin{figure*}[ht]
\centering
\includegraphics[width=0.9\linewidth]{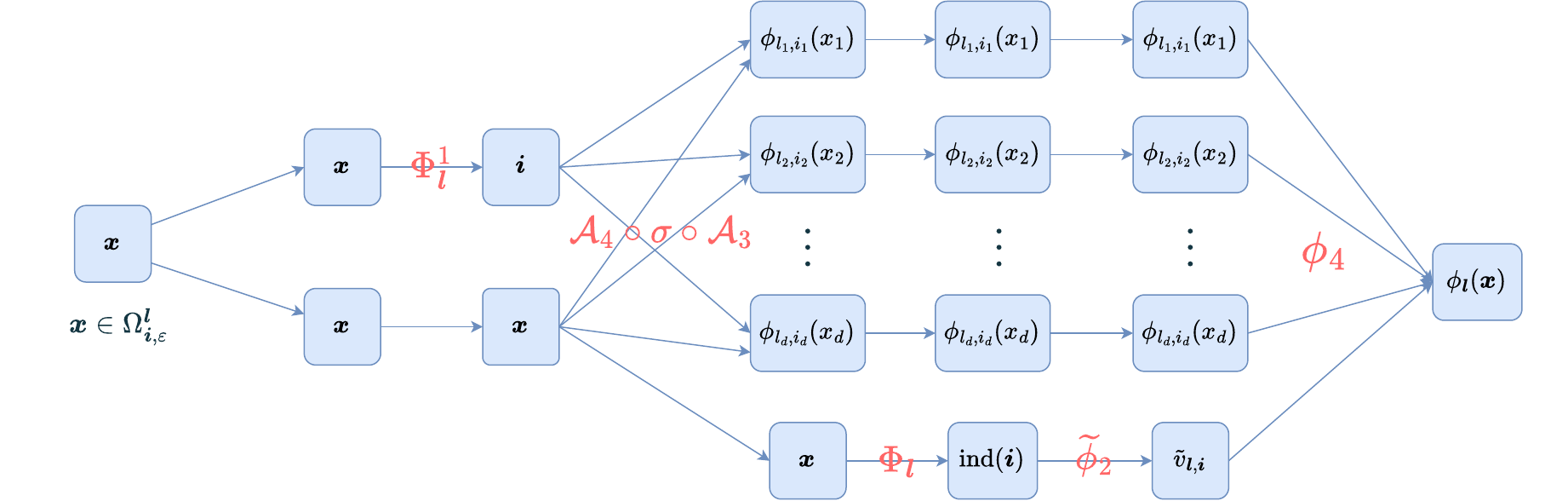}
\caption{An illustration of the network architecture implementing $\phi_{\bm l}$, where the input $\bm x$ belongs to a region $\Omega_{\bm i,\varepsilon}^{\bm l}$. Red functions represent the transformation from the left input vector to the right output vector. The $d$-dimensional identity mapping can be represented as a ReLU network with a width of $2d$ and a depth of $1$. Finally, the product network $\phi_4$ of $d+1$ variables implies the desired network function.}
\label{fig:phi_l_Lp}
\end{figure*}
Finally, let $\phi_{\bm l}:= \phi_4 \circ \phi_3$; see Figure \ref{fig:phi_l_Lp} for illustration. Using Lemma \ref{Le:concate}, we have 
$$
\phi_{\bm l}
\in \mathcal{NN}(16 ds(W+1) \log(8 W)+7d,(26L+21)\log(8 L)+14(d+1)^2L).
$$
Furthermore, by $|\widetilde{v}_{\bm l,\bm i}|\leq 3$ and $|\phi_{l_j,i_j}|\leq 1$ for $j=1,2,\ldots,d$, we can establish that
$$
|\phi_{\bm l}(\bm x)-\widetilde{v}_{\bm l,\bm i}\phi_{\bm l,\bm i}(\bm x)| \leq 3^{5d+4}(W+1)^{-14 (d+1) L},
$$
for any $\bm x\in \Omega_{\bm i,\varepsilon}^{\bm l}$ and
for each $\bm i\in \mathcal{I}_{\bm l}$. Combining this result with \eqref{Eq:v_bound} and $\|\phi_{\bm l,\bm i}\|_{L_{\infty}(\Omega)}\leq 1$ implies that
$$
\begin{aligned}
&|\phi_{\bm l}(\bm x)-f_{\bm l}(\bm x)|\\
\leq& \Big|\phi_{\bm l}(\bm x)-\sum_{\bm i \in \mathcal{I}_{\bm l}}\widetilde{v}_{\bm l,\bm i}\phi_{\bm l,\bm i}(\bm x)\Big| +\Big|\sum_{\bm i \in \mathcal{I}_{\bm l}}(v_{\bm l,\bm i}-\widetilde{v}_{\bm l,\bm i})\phi_{\bm l,\bm i}(\bm x)\Big|\\
\leq &3^{5d+4}(W+1)^{-14 (d+1) L}+ 2W^{-2s}(2L)^{-2s}|\mathcal{I}_{\bm l}|\\
\leq &3^{5d+4}(W+1)^{-14 (d+1) L}+ 2W^{-2s+2}(2L)^{-2s+2}
\end{aligned}
$$
for any $\bm x\in \Omega_{\varepsilon}^{\bm l}$.



\textbf{Step 2} Construct the desired function $\phi$ and determine the size of the ReLU network implementing $\phi$ and the approximation error for $f$.

Given $1\leq p<\infty$, we have 
$$
\begin{aligned}
\|f_{\bm l}-\phi_{\bm l}\|_{L_p(\Omega)}^p
&=\|f_{\bm l}-\phi_{\bm l}\|_{L_p(\Omega_{\varepsilon})}^p+\|f_{\bm l}-\phi_{\bm l}\|_{L_p(\Omega\backslash\Omega_{\varepsilon})}^p\\
&\leq \left(3^{5d+4}(W+1)^{-14 (d+1) L}+ 2W^{-2s+2}(2L)^{-2s+2}\right)^p\\
&+\mu((\Omega\backslash\Omega_{\varepsilon}))(\|f_{\bm l}\|_{L_{\infty}(\Omega)}
+\|\phi_{\bm l}\|_{L_{\infty}(\Omega)})^p,
\end{aligned}
$$
where $\Omega_{\varepsilon}:=\cap_{|\bm l|_1\leq  n+d-1}\Omega_{\varepsilon}^{\bm l}$ and $\mu$ is the Lebesgue measure of a set in $\mathbb R^d$. In fact, we can calculate $\|f_{\bm l}\|_{L_{\infty}(\Omega)}
$ (see \cite{Bungartz_Griebel2004}) and $\|\phi_{\bm l}\|_{L_{\infty}(\Omega)}$ using Lemma 2.2 in \cite{Shen_Yang_Zhang2019} for each $\bm l$ satisfying $|\bm l|_1\leq n+d-1$. Notably, both these norms only depend on 
$n$. Consequently, by selecting a sufficiently small $\varepsilon$, we can ensure the following holds
$$
\begin{aligned}
\mu((\Omega\backslash\Omega_{\varepsilon}))^{\frac{1}{p}}(\|f_{\bm l}\|_{L_{\infty}(\Omega)}
+\|\phi_{\bm l}\|_{L_{\infty}(\Omega)})
\leq 3^{5d+4}(W+1)^{-14 (d+1) L}+ 2W^{-2s+2}(2L)^{-2s+2}.
\end{aligned}
$$
By setting $s=3$, we can deduce
\begin{equation}\label{Eq:phi_l_size}
\begin{aligned}
\phi_{\bm l}
&\in \mathcal{NN}(48d(W+1) \log(8 W)+7d,
(26L+21)\log(8 L)+14(d+1)^2L)\\
&\subset\mathcal{NN}(55d(W+1) \log(8 W),
40(d+1)^2(L+1)\log(8 L)),
\end{aligned}
\end{equation} 
and
\begin{equation}\label{Eq:phi_l_error}
\|f_{\bm l}-\phi_{\bm l}\|_{L_p(\Omega)}
\leq 3^{5d+6}(W+1)^{-14 (d+1) L}+ 2W^{-4}L^{-4}\leq 4W^{-4}L^{-4}
\end{equation}
for each $\bm l$ which satisfies $|\bm l|_1\leq n+d-1$. Define
$$
\phi:=\sum_{|\bm l|_1\leq n+d-1}\phi_{\bm l}.
$$
Then, utilizing Lemma \ref{Le:size_transition}, \eqref{Eq:phi_l_size} and the fact 
\begin{equation}\label{Eq:sigma_1_bound}
\begin{aligned}\sum_{|\bm l|_1\leq n+d-1}1 &=|\{\bm l\in \mathbb N_+^d:|\bm l|_1\leq n+d-1\}|\leq (n+d)^d\leq (2d)^d(\log(8W))^d(\log(8L))^d,
\end{aligned}
\end{equation}
where the first inequality is a classical result in the literature (see also \cite[Lemma 4.7]{li2025newrationalapproximationalgorithm}), the size of the network function satisfies
$$
\phi \in \mathcal{NN}(C_1W \log(8 W)^{d+1},C_2L\log(8 L)^{d+1}),
$$ 
where $C_1=112d(2d)^d$ and $C_2=320d^2$.
Finally, combining $n=\left\lceil2 \log( 2WL)\right\rceil$ with \eqref{Eq:phi_l_error} and \eqref{Eq:sparse_error} implies 
$$
\begin{aligned}
\|f-\phi\|_{L_p(\Omega)}&\leq \|f-\Pi_nf\|_{L_p(\Omega)}+\|\Pi_nf-\phi\|_{L_p(\Omega)} \\
&\leq \|f-\Pi_nf\|_{L_p(\Omega)}+\sum_{|\bm l|_1\leq n+d-1}\|f_{\bm l}-\phi_{\bm l}\|_{L_p(\Omega)}\\
&\leq 2^{-2n}(nd)^{3(d-1)}+4(n+d)^dW^{-4}L^{-4}\\
&\leq C_3 W^{-4}L^{-4}(\log( 2WL))^{3d},
\end{aligned}
$$
where $C_3= (2d)^{3d}$.
{  Substituting $W$ and $L$ with $\frac{W}{C_1 \log(8 W)^{d+1}}$ and $\frac{L}{C_2\log(8 L)^{d+1}}$, respectively, yields $
\phi \in \mathcal{NN}(W,L)
$ and the following error bound:
$$
\|f-\phi\|_{L_p(\Omega)}\leq \widetilde{C}_1W^{-4}L^{-4}(\log(8 W))^{7d+4}(\log(8 L))^{7d+4}
$$ where $\widetilde{C}_1=(C_1C_2)^4C_3$.
}
\subsubsection{The case \texorpdfstring{$m\geq 3$}{m >= 3}}\label{Relucase2}
For $m\geq 3$, let $n=\left\lceil2 \log( 2WL)\right\rceil$ and consider the higher order sparse grid interpolation  
\begin{equation}\label{Eq:interpolation_m}
\begin{aligned}
\Pi_n^mf(\bm{x})&=\sum_{|\bm l|_1 \leq n+d-1} \sum_{\bm i \in \mathcal{I}_{\bm l}} v_{\bm l, \bm i}^m\phi_{\bm l, \bm i}^m(\bm {x}):=\sum_{|\bm l|_1 \leq n+d-1}f_{\bm l}^m
.
\end{aligned}
\end{equation}
Given $1\leq p<\infty$, the interpolation error of $\Pi_n^mf$ has been discussed (see \cite{Bungartz_Griebel2004,lizhang2025korobov}) and is presented as follows
\begin{equation}\label{Eq:higherorder_interpolation_error}
\|f-\Pi_n^mf\|_{L_p(\Omega)} \leq C_{m,d}2^{-mn} n^d,
\end{equation}
where $C_{m,d}$ is a constant depending only on $m$ and $d$.
Furthermore, it is observed in \cite{lizhang2025korobov} that the basis function in \eqref{Eq:interpolation_m} could be written as the following form
$$
\phi_{\bm{l}, \bm{i}}^m(\bm{x})=\prod_{j=1}^d \prod_{k=1}^{m-1} \rho_{l_j,i_j,k}(x_j),
$$
where each $\rho_{l_j,i_j,k}(x_j)$ can be represented as $$ \sigma(a_{l_j,i_j,k}x_j+b_{l_j,i_j,k})$$ with $a_{l_j,i_j,k},b_{l_j,i_j,k}\in\mathbb{R}$ and satisfies $0\leq\rho_{l_j,i_j,k}(x_j)\leq 2^{n+d-1}$. We observe that for $m\geq 3$, the hierarchical Lagrange interpolation of $f\in X^m_p(\Omega)$ (see \cite{Bungartz_Griebel2004}) on sparse grid can result in at most 
$2^{m-3}
$ possibilities for the shape of the basis functions (see \cite[Section 4.2]{Bungartz_Griebel2004} and Figure 4.8 therein for instance). We choose the same shape in each support set of the basis function, e.g., the shape of the basis function at the position
$\bm i=\bm 1$. Since the $L_p$ norm of $\phi_{\bm l,\bm i}^m$ is independent of the position of $\bm i$, the 
$L_p$-norm of the changed basis functions on $
\Omega_{\bm
l,\bm i}$ and  the error of the sparse grid interpolation
remains the same. Henceforth, we will continue to denote the modified basis functions as $\phi_{\bm l,\bm i}^m$.
And, the product factors could be represented as  $$\rho_{l_j,i_j,k}(x_j)=\sigma(-2^{l_j}c_{k,j} x_j+c_{k,j}\bm i_j+1),$$ for $j=1,2,\ldots,d$ and for $k=1,2\ldots,m-1$. Here, some 
$c_{k,j}$ values are appropriately set to $0$ based on the degree of the basis functions. Since each $\rho_{l_j,i_j,k}(x_j)$ is a piecewise linear function activated by the ReLU function, we can define an affine map as follows
$$
\mathcal{B}_1\left(\binom{\bm x}{\bm x}\right)=\begin{pmatrix}
     \bm B_1 &\bm O \\
   \bm O& \bm I_d
\end{pmatrix}\binom{\bm x}{\bm x}+\bm b_4
$$ for any $\bm x\in \mathbb R^d$, where $\bm B_1\in \mathbb R^{(m-1)d\times d}$ and $\bm b_4 \in \mathbb R^{md}$ are determined by $c_{k,j}$ and $l_j$, satisfying for any $\bm x\in \Omega_{\bm i,\varepsilon}^{\bm l}$,
$$
\begin{aligned}
\sigma\circ \mathcal{B}_1\circ g_1\circ \mathcal{A}_2(\bm x)
=&\left(\psi_{l_1,i_1,1}(x_1),\ldots,\psi_{l_1,i_1,m-1}(x_1),\right.\\
&\left.\ldots,\psi_{l_d,i_d,1}(x_d),\ldots,\psi_{l_d,i_d,m-1}(x_d),\bm x^{\top}\right)^{\top},
\end{aligned}
$$
where $0\leq \psi_{l_j,i_j,k}=2^{-n-d+1}\rho_{l_j,i_j,k}\leq 1$ for any $j=1,2,\ldots,d$ and $k=1,2,\ldots,m-1$. Here $\mathcal{A}_2$ and $g_1$ are defined in \eqref{Eq:affine_A2} and \eqref{Eq:mapx_to_i} respectively, and $\sigma$ represents the ReLU function. By Proposition 4.1 in \cite{lu_shen_yang_zhang2021deep} with $d=m-1$ and $k=m$, replacing $L$ by $dL$, there exists a function 
$$P\in \mathcal{NN}(9(W+1)+m-1,14 m^2 dL)$$ such that
$$
|P(\bm{z})-z_1z_2\cdots z_{m-1}| \leq 9 m(W+1)^{-14 md L}
$$
for any $\bm{z}=(z_1,z_2,\ldots,z_{m-1}) \in[0,1]^{m-1}$.
Combining this result with Lemma \ref{Le:concate} and \eqref{Eq:g_1A_2size} implies that there exists a function 
$
\Theta_1\in \mathcal{NN}(21md W,28 m^2d L)
$ such that  for any $\bm x\in \Omega_{\bm i,\varepsilon}^{\bm l}$, the equality
$$
\begin{aligned}
\Theta_1(\bm x)
=&\left(\widetilde{\psi}_{l_1,i_1}(x_1),\psi_{l_2,i_2,1}(x_2),\ldots,\psi_{l_2,i_2,m-1}(x_2),\right.\\
&\left.\ldots,\psi_{l_d,i_d,1}(x_d),\ldots,\psi_{l_d,i_d,m-1}(x_d),\bm x^{\top}\right)^{\top}
\end{aligned}
$$ holds. Continuing this process $d-1$ times, we derive a function $\Theta\in  \mathcal{NN}(21md W,28 m^2 d^2L)$ such that for any $\bm x\in \Omega_{\bm i,\varepsilon}^{\bm l}$ we have
$$
\Theta(\bm x)=\left(\begin{array}{c}
\widetilde{\psi}_{l_1,i_1}(x_1)\\
\widetilde{\psi}_{l_2,i_2}(x_2)\\

\vdots\\
\widetilde{\psi}_{l_d,i_d}(x_d)\\
\bm x
\end{array}\right),
$$
and it satisfies
\begin{equation}\label{Eq:widetilde_psi_error}
\Big|\widetilde{\psi}_{l_j,i_j}(x_j)-\prod_{k=1}^{m-1}\psi_{l_j,i_j,k}(x_j)\Big|\leq 9 m(W+1)^{-14 m dL}
\end{equation}
and $
|\widetilde{\psi}_{l_j,i_j}(x_j)|\leq 2$,
for all $j=1,2,\ldots,d$. 
By following a similar rationale as in the case of 
$m=2$ with 
$
s=md+m$, we ultimately obtain
\begin{equation}\label{Eq:phi_l1_size}
\phi^0_{\bm l}
\in \mathcal{NN}(66mdW\log(8W),90m^2d^2L\log(8L))
\end{equation} 
and 
\begin{equation}\label{Eq:phi_l0_error}
\begin{aligned}
\Big|\phi^0_{\bm l}-\widetilde{v}^m_{\bm l,\bm i}\prod_{j=1}^d\widetilde{\psi}_{l_j,i_j}(x_j)\Big|
&\leq C_{m,d}(W+1)^{-14mdL}
\end{aligned}
\end{equation}
for $\bm x\in \Omega_{\varepsilon}^{\bm l}$, where $\widetilde{v}^m_{\bm l,\bm i}$ satisfies
\begin{equation}\label{Eq:v_li_1error}
|\widetilde{v}^m_{\bm l,\bm i}-v^m_{\bm l,\bm i}|\leq 2C_{m,d}W^{-2md-2m}(2L)^{-2md-2m}.
\end{equation}
Then define $\phi^1_{\bm l}:=2^{md(n+d-1)}\phi^0_{\bm l}$ for each $\bm l$ which satisfies $|\bm l|_1\leq n+d-1$. Combining  \eqref{Eq:widetilde_psi_error}, \eqref{Eq:phi_l0_error}, \eqref{Eq:v_li_1error}, Lemma \ref{Le:prod_approx_difference}, and the inequality $2^{md(n+d-1)}\leq C_{m,d}W^{2md}(2L)^{2md}$ implies that
\begin{equation}\label{Eq:phi_1l_error}
\begin{aligned}
&\Big|\phi_{\bm l}^1-\sum_{\bm i \in \mathcal{I}_{\bm l}} v_{\bm l, \bm i}^m\phi_{\bm l, \bm i}^m(\bm {x})\Big|\\
\leq &2^{md(n+d-1)}\left( \Big|\phi_{\bm l}^0-\widetilde{v}^m_{\bm l,\bm i}\prod_{j=1}^d\widetilde{\psi}_{l_j,i_j}(x_j)\Big|+\Big|\widetilde{v}^m_{\bm l,\bm i}\prod_{j=1}^d\widetilde{\psi}_{l_j,i_j}(x_j)-v^m_{\bm l,\bm i}\prod_{j=1}^d \prod_{k=1}^{m-1} \psi_{l_j,i_j,k}(x_j)\Big|\right)\\
\leq &C_{m,d}W^{2md}(2L)^{2md}\cdot \Big((W+1)^{-14mdL}+\left.2W^{-2md-2m}(2L)^{-2md-2m}\right.\\
&+\left.\Big|\prod_{j=1}^d\widetilde{\psi}_{l_j,i_j}(x_j)-\prod_{j=1}^d\prod_{k=1}^{m-1}\psi_{l_j,i_j,k}(x_j)\Big|\right)\\
\leq &C_{m,d}W^{-2m}L^{-2m}
\end{aligned}
\end{equation}
for any $\bm x\in \Omega_{\varepsilon}^{\bm l}$.
Define
$$
\phi^1:=\sum_{|\bm l|_1\leq n+d-1}\phi^1_{\bm l}.
$$
By utilizing Lemma \ref{Le:size_transition}, \eqref{Eq:sigma_1_bound}, and \eqref{Eq:phi_l1_size}, the size of the network function $\phi^1$ satisfies
$$
\phi^1 \in \mathcal{NN}(C_7W(\log(8W))^{d+1},C_8L(\log(8L))^{d+1})
$$ 
where $C_7=70md^{d+1}$, $C_8=90m^2d^22^d$.
The remaining process involves estimating the error bounds of neural networks in the $L_p$ norm. In this scenario, we can transfer the $L_p$ integral of the error from the region $\Omega_{\varepsilon}^{\bm l}$ to the entire region $\Omega$ using the same method as in the case where $m=2$.
Finally, by 
combining \eqref{Eq:higherorder_interpolation_error} and \eqref{Eq:phi_1l_error} and using $n=\left\lceil2 \log( 2WL)\right\rceil$, we arrive at 
$$
\begin{aligned}
\|f-\phi^1\|_{L_p(\Omega)}&\leq \|f-\Pi_n^mf\|_{L_p(\Omega)}+\|\Pi_n^mf-\phi^1\|_{L_p(\Omega)} \\
&\leq C_{m,d}2^{-mn} n^d+\sum_{|\bm l|_1\leq n+d-1}\|f_{\bm l}^m-\phi^1_{\bm l}\|_{L_p(\Omega)}\\
&\leq C_{m,d}W^{-2m}L^{-2m}(\log( 2WL))^{d}
\end{aligned}
$$
where $C_{m,d}$ is a constant depending only on $m$ and $d$. {  Substituting $W$ and $L$ with $\frac{W}{C_7 \log(8 W)^{d+1}}$ and $\frac{L}{C_8\log(8 L)^{d+1}}$, respectively, yields $
\phi \in \mathcal{NN}(W,L)
$ and the following error bound:
$$
\|f-\phi^1\|_{L_p(\Omega)}\leq C_{m,d}W^{-2m}L^{-2m}(\log(8 W))^{2(m+2)(d+1)}(\log(8 L))^{2(m+2)(d+1)}
$$ where $C_{m,d}$ is a constant depending only on $m$ and $d$.
}
Thus we finish the proof.
\vspace{2mm}

\begin{remark}
It is worth noting that when $p=\infty$, the term $\mu((\Omega\backslash\Omega_{\varepsilon}))^{1/p}(\|f_{\bm l}\|_{L_{\infty}(\Omega)}
+\|\phi_{\bm l}\|_{L_{\infty}(\Omega)})$ cannot be rendered sufficiently small simply by reducing $\varepsilon$.  Consequently, the approximation error on the entire domain $\Omega$ cannot be directly controlled by the error on $\Omega_{\varepsilon}$ as formulated in \eqref{Eq:phi_l_error}. To address this for $f\in X^m_{\infty}(\Omega)$, we can adopt an alternative approach as established in \cite{Yang_Lu2024,lu_shen_yang_zhang2021deep} and Theorem 2.1 of \cite{lu_shen_yang_zhang2021deep} to derive the corresponding approximation rates.   
\end{remark}

\section{Approximation by ReLU DNN in \texorpdfstring{$W^1_p$}{W1p} Norm}\label{Sec:ReLU_DNNW1p}
The proof of Theorem \ref{Th:mainresultW1p} relies on several concepts and lemmas listed below. 
\vskip2mm
\begin{definition}\label{Def:Omega_k}
Given $K,d \in \mathbb N_+$, for any $\bm k = (k_1,k_2,\ldots,k_d)\in \{1,2\}^d$, define
$$
\Omega_{\bm k}:=\prod_{j=1}^d\Omega_{k_j},
$$
where 
$\Omega_1=\prod_{i=0}^{K-1}\left[\frac{i}{K},\frac{i}{K}+\frac{3}{4K}\right]$ and $\Omega_2=\prod_{i=0}^{K}\left[\frac{i}{K}-\frac{1}{2K},\frac{i}{K}+\frac{1}{4K}\right]\cap[0,1]
$; see Figure \ref{fig:g_omega} for illustration.   
\end{definition}
\vskip2mm
\begin{definition}\label{Def:g_k}
Given $K, d \in \mathbb{N}_{+}$, define $g_{\bm{k}}(\bm{x})=\prod_{j=1}^d g_{k_j}(x_j)$ for all $\bm x\in \mathbb R^d$ and for each $\bm{k}=\left(k_1, k_2, \ldots, k_d\right) \in\{1,2\}^d$, where
\begin{equation}\label{Eq:g_k}
g_1(x):=\left\{\begin{array}{ll}
1, & x \in\left[\frac{i}{K}+\frac{1}{4 K}, \frac{i}{K}+\frac{1}{2 K}\right] \\
0, & x \in\left[\frac{i}{K}+\frac{3}{4 K}, \frac{i+1}{K}\right] \\
4 K\left(x-\frac{i}{K}\right), & x \in\left[\frac{i}{K}, \frac{i}{K}+\frac{1}{4 K}\right] \\
-4 K\left(x-\frac{4i+3}{4K}\right), & x \in\left[\frac{i}{K}+\frac{1}{2 K}, \frac{i}{K}+\frac{3}{4 K}\right]
\end{array}, 
\right.
\end{equation}
and
$
g_2(x):=g_1\left(x+\frac{1}{2 K}\right)
$; see Figure \ref{fig:g_omega} for illustration.
\end{definition}
\vskip2mm
\begin{lemma}[Proposition 1 from \cite{YangYangXiang2023}]\label{Le:phi_kg_k_error}
Given any $W, L, n ,d\in \mathbb{N}_{+}$ and $K=W^2L^2$, then for any $\bm{k}=\left(k_1, k_2, \ldots, k_d\right) \in\{1,2\}^d$, there is a function $$\phi_{\bm{k}} \in \mathcal{NN}((9+d)(W+1)+d-1,15 d(d-1) n L)$$ such that $$\left\|\phi_{\bm{k}}(\bm{x})-g_{\bm{k}}(\bm{x})\right\|_{W^1_{ \infty}\left(\Omega\right)} \leq 50 d^{\frac{5}{2}}(W+1)^{-4 d n L},$$   
where $g_{\bm k}$ is determined by $K$ and $d$ as specified in \eqref{Def:g_k}.
\end{lemma}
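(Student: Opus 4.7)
The plan is to exploit the tensor-product structure $g_{\bm k}(\bm x)=\prod_{j=1}^d g_{k_j}(x_j)$ by first realizing each univariate factor exactly (off a negligible set) with a small ReLU subnetwork, then combining them via the $W^1_\infty$ product approximation from Lemma \ref{Le:polynomial_W1p}. Each $g_{k_j}$ is a continuous piecewise linear function on $[0,1]$ with $O(K)=O(W^2L^2)$ trapezoidal periods of height $1$ and slopes in $\{0,\pm 4K\}$, so it is a natural target for a construction built on top of the step function of Lemma \ref{Le:stepfunction}.

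\textbf{Step 1 (Univariate realization).} I would construct for each coordinate a univariate network $\tilde g_{k_j}\in\mathcal{NN}(O(W),O(nL))$ satisfying $\tilde g_{k_j}\equiv g_{k_j}$ off a set of arbitrarily small measure. Concretely, let $s_K$ be the step function from Lemma \ref{Le:stepfunction} with $K=W^2L^2$ plateaus, and define
$$\tilde g_1(x):=T\bigl(4Kx-4\,s_K(x)\bigr), \qquad T(y):=\sigma(y)-\sigma(y-1)-\sigma(y-2)+\sigma(y-3).$$
Direct verification shows $\tilde g_1$ coincides with $g_1$ on the good plateaus of $s_K$, and the half-period shift $g_2(x)=g_1(x+\tfrac{1}{2K})$ is handled by an input translation. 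Inflating the depth parameter of Lemma \ref{Le:stepfunction} by the accuracy factor $n$ keeps width $O(W)$ and raises depth to $O(nL)$; crucially, wherever $\tilde g_{k_j}$ agrees with $g_{k_j}$, their derivatives agree as well.

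\textbf{Step 2 (Product assembly).} Concatenating the $d$ univariate networks in parallel via Lemma \ref{Le:concate}(3) produces a vector-valued network of width $O(dW)$ and depth $O(nL)$. Feeding its output into a $d$-factor product approximation of the type supplied by Lemma \ref{Le:polynomial_W1p}, with depth parameter chosen as $\Theta(nL)$, and composing via Lemma \ref{Le:concate}(2), yields $\phi_{\bm k}$. Careful bookkeeping of the parallel width contributions $d(W+1)$ on top of the $9(W+1)+d-1$-wide product subnetwork gives the stated width $(9+d)(W+1)+d-1$, while the nested binary product construction contributes the factor $d(d-1)$ to the depth $15d(d-1)nL$.

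\textbf{Step 3 (Error in $W^1_\infty$, the main obstacle).} The bottleneck is $\|g_{k_j}'\|_{L_\infty}=4K=4W^2L^2$, which amplifies any product error when differentiating. Writing
$$\partial_j\phi_{\bm k}-\partial_j g_{\bm k}=\bigl(\partial_j\Phi-\textstyle\prod_{i\ne j}g_{k_i}(x_i)\bigr)\,g_{k_j}'(x_j)+\partial_j\Phi\cdot\bigl(\tilde g_{k_j}'(x_j)-g_{k_j}'(x_j)\bigr),$$
where $\Phi$ denotes the product subnetwork evaluated at the univariate outputs, the second summand vanishes on the good region since $\tilde g_{k_j}\equiv g_{k_j}$ (in both value and derivative) there, and the first summand is bounded by the $W^1_\infty$ product error of Lemma \ref{Le:polynomial_W1p}, which is super-polynomially small in its depth parameter, multiplied by $4K$. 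Choosing that depth parameter to be $\Theta(nL)$ absorbs the $K=W^2L^2$ amplification since $(W+1)^{-7d\cdot\Theta(nL)}\ll K^{-1}(W+1)^{-4dnL}$. Summing the $d$ coordinate contributions together with the $L_\infty$ term, and using the explicit $12a^2$ uniform bound on $\Phi$ from Lemma \ref{Le:polynomial_W1p}, produces the stated constant $50d^{5/2}$ and exponent $4dnL$.
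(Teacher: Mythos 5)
Your Step 1 is where the argument breaks down, and the failure is fatal. The step function $s_K$ from Lemma \ref{Le:stepfunction} is piecewise constant only on the ``good'' plateaus $[\tfrac{k}{K},\tfrac{k+1}{K}-\varepsilon]$; between consecutive plateaus it must transition from $k$ to $k+1$ over an interval of length $\varepsilon$, with slope $\sim 1/\varepsilon$. On such a transition interval, the argument $4Kx-4s_K(x)$ of your outer function $T$ sweeps from roughly $4$ back down to roughly $0$, so $\tilde g_1(x)=T(4Kx-4s_K(x))$ traces out an entire spurious trapezoidal bump: it attains the value $1$ somewhere in the $\varepsilon$-gap while $g_1$ is identically $0$ there. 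Thus $\|\tilde g_1-g_1\|_{L_\infty([0,1])}\geq 1$ and $\|\tilde g_1'-g_1'\|_{L_\infty}\gtrsim 1/\varepsilon$, no matter how small you take $\varepsilon$ or how much you inflate the depth. The observation that ``wherever $\tilde g_{k_j}$ agrees with $g_{k_j}$, their derivatives agree as well'' is true but does not help: the lemma demands a $W^1_\infty$ bound over \emph{all} of $\Omega$ (and indeed the proof of Theorem \ref{Th:mainresultW1p} uses it globally to bound $\|f\cdot(g_{\bm k}-\phi_{\bm k})\|_{W^1_p(\Omega)}$), so the excluded $\varepsilon$-set is not negligible.

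The step-function/$\varepsilon$-gap device is the right tool when the final estimate is in $L_p$ (small measure gives small contribution) or is restricted to a subdomain such as $\Omega_{\bm k}$ or $\Omega_\varepsilon$ which has been engineered to avoid the gaps; it is the wrong tool for a sup-norm (let alone $W^1_\infty$) estimate over the full cube. For this lemma the univariate factor $g_{k_j}$ must be reproduced \emph{exactly} (both in value and derivative) on all of $[0,1]$, with no excluded sets. That is possible because $g_{k_j}$ is a continuous, $1/K$-periodic piecewise-linear pulse with $O(K)$ breakpoints, and a ReLU network of width $O(W)$ and depth $O(nL)$ can realize $\Theta(W^{nL})\gg 4K=4W^2L^2$ linear pieces by composing piecewise-linear ``sawtooth''/''tent'' maps — a construction that is exact everywhere, not just off a thin exceptional set. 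Once the univariate factors are exact, your Steps 2 and 3 (parallel concatenation, the product lemma with depth parameter $\Theta(nL)$ to absorb the $4K$ derivative amplification, and the resulting constant $50d^{5/2}$) are a sound outline; it is the exactness of the univariate factors that needs a genuinely different mechanism than Lemma \ref{Le:stepfunction}.
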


\begin{figure}
    \centering   \includegraphics[width=0.6\linewidth]{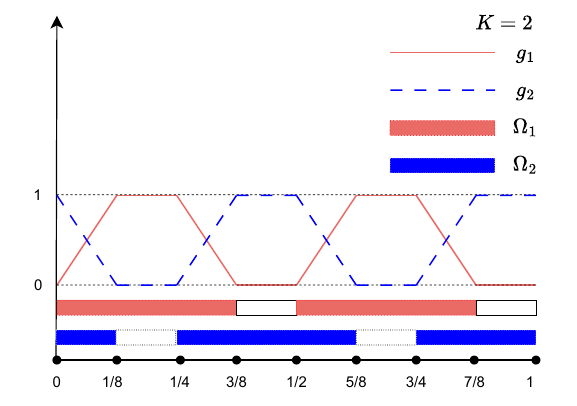}
    \caption{An illustration of $g_1,g_2,\Omega_1,$ and $\Omega_2$, where $K=2$.}
    \label{fig:g_omega}
\end{figure}

\begin{lemma}[Lemma 7 from \cite{YangYangXiang2023}]\label{Le:product_W1p_to_Omegak}
For any $g(\bm{x}) \in W^1_{\infty}(\Omega)$ and $\phi_{\bm{k}}(\bm{x})$ derived from Lemma \ref{Le:phi_kg_k_error} with $\bm{k}=\left(k_1, k_2, \ldots, k_d\right) \in\{1,2\}^d$, let
$
M=\max \left\{\|g\|_{W^1_{\infty}\left(\Omega\right)},\left\|\phi_{\bm{k}}\right\|_{W^1_{\infty}\left(\Omega\right)}\right\}
$,
then we have
$$
\left\|\phi_{\bm{k}}(\bm{x}) \cdot g(\bm{x})\right\|_{W^1_{\infty}\left(\Omega\right)}  =\left\|\phi_{\bm{k}}(\bm{x}) \cdot g(\bm{x})\right\|_{W^1_{\infty}\left(\Omega_{\bm{k}}\right)}
$$
and
$$
\left\|\phi_{\bm{k}}(\bm{x}) \cdot g(\bm{x})-\phi_M\left(\phi_{\bm{k}}(\bm{x}), g(\bm{x})\right)\right\|_{W^1_{\infty}\left(\Omega\right)}  =\left\|\phi_{\bm{k}}(\bm{x}) \cdot g(\bm{x})-\phi_M\left(\phi_{\bm{k}}(\bm{x}), g(\bm{x})\right)\right\|_{W^1_{\infty}\left(\Omega_{\bm{k}}\right)}
$$
where  $\phi_M$ is from Lemma \ref{Le:product_W1p_error} by choosing $a=M$.    
\end{lemma}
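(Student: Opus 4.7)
The plan is to reduce both equalities to a single underlying structural fact: the functions $\phi_{\bm{k}}(\bm{x})g(\bm{x})$ and $\phi_M(\phi_{\bm{k}}(\bm{x}),g(\bm{x}))$, together with all their first-order (weak) partial derivatives in $\bm{x}$, vanish pointwise on $\Omega\setminus\Omega_{\bm{k}}$. Once that is in place, since the $W^1_\infty$ norm is an essential supremum of $|f|$ and $|\nabla f|$, deleting $\Omega\setminus\Omega_{\bm{k}}$ from the domain cannot change the norm, and both displayed equalities follow at once.

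The central ingredient I would establish first is that $\phi_{\bm{k}}$ and each of its partial derivatives $\partial_{x_i}\phi_{\bm{k}}$ vanish on $\Omega\setminus\Omega_{\bm{k}}$. This is not incidental to Lemma \ref{Le:phi_kg_k_error} but is hardwired into the construction: the tensor product $g_{\bm{k}}(\bm{x})=\prod_{j=1}^d g_{k_j}(x_j)$ is approximated by iteratively composing the approximate-product network $\phi_M$ from Lemma \ref{Le:product_W1p_error} with the one-dimensional piecewise linear factors $g_{k_j}(x_j)$ (which ReLU networks can reproduce exactly). At each step the current partial product is placed in the first slot of the next $\phi_M$ call; the moment a factor $g_{k_j}(x_j)$ vanishes (i.e., $x_j\notin\Omega_{k_j}$), it eventually occupies the first slot of some $\phi_M$, and the boundary conditions $\phi_M(0,y)=0$ and $\partial_y\phi_M(0,y)=0$ from Lemma \ref{Le:product_W1p_error} force the partial product \emph{and}, after differentiating in $\bm{x}$, all of its first partial derivatives to vanish from that step onward. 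This gives the desired support property for $\phi_{\bm{k}}$.

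With that property in hand, the first equality follows immediately from the product rule: for $\bm{x}\in\Omega\setminus\Omega_{\bm{k}}$, $\phi_{\bm{k}}(\bm{x})g(\bm{x})=0$ and $\partial_{x_i}(\phi_{\bm{k}} g)=(\partial_{x_i}\phi_{\bm{k}})g+\phi_{\bm{k}}(\partial_{x_i}g)=0$, so $\phi_{\bm{k}} g$ and its gradient both vanish on the complement of $\Omega_{\bm{k}}$. For the second equality I would apply the chain rule to $\phi_M(\phi_{\bm{k}},g)$ and again invoke Lemma \ref{Le:product_W1p_error}: the identity $\phi_M(0,y)=0$ gives $\phi_M(\phi_{\bm{k}}(\bm{x}),g(\bm{x}))=0$ on $\Omega\setminus\Omega_{\bm{k}}$, while $\partial_y\phi_M(0,y)=0$ together with $\partial_{x_i}\phi_{\bm{k}}=0$ makes every term in
$$
\partial_{x_i}\phi_M(\phi_{\bm{k}},g)=\partial_1\phi_M(\phi_{\bm{k}},g)\,\partial_{x_i}\phi_{\bm{k}}+\partial_2\phi_M(\phi_{\bm{k}},g)\,\partial_{x_i}g
$$
vanish on $\Omega\setminus\Omega_{\bm{k}}$, which closes the argument.

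The principal obstacle is the support-and-derivative claim for $\phi_{\bm{k}}$: one must carefully track zero-propagation through the iterated $\phi_M$ composition, paying attention to which slot each factor occupies, so that the vanishing of any single $g_{k_j}(x_j)$ simultaneously kills $\phi_{\bm{k}}$ and all $d$ of its partial derivatives in $\bm{x}$. Once that bookkeeping is done, the remainder is a routine application of the product and chain rules, and the two norm identities fall out without further work.
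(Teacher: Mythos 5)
The paper does not prove this lemma; it is imported verbatim as Lemma~7 of \cite{YangYangXiang2023}, so there is no in-paper argument to compare against. Your reduction strategy is the natural and correct one: once you know that $\phi_{\bm{k}}$ together with all of its first-order partial derivatives vanishes identically on $\Omega\setminus\Omega_{\bm{k}}$, the product rule and the chain rule immediately give that both $\phi_{\bm{k}}g$ and $\phi_M(\phi_{\bm{k}},g)$ and their gradients vanish there, and since the $W^1_\infty$ norm is an essential supremum the two equalities drop out. That part of your writeup is fine.

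The gap is in the piece you yourself flag as the ``principal obstacle,'' and I don't think your sketch resolves it. Lemma~\ref{Le:product_W1p_error} only guarantees the one-sided boundary conditions $\phi_M(0,y)=0$ and $\partial_y\phi_M(0,y)=0$; there is no claim that $\phi_M(z,0)=0$. Now suppose the chain of approximate products is $P_1=g_{k_1}(x_1)$, $P_j=\phi_M(P_{j-1},g_{k_j}(x_j))$. If $\bm{x}\notin\Omega_{\bm{k}}$ because $x_j\notin\Omega_{k_j}$ for some $j\geq 2$ while all earlier factors are nonzero, then $g_{k_j}(x_j)=0$ sits in the \emph{second} slot, and $P_j=\phi_M(P_{j-1},0)$ has no reason to vanish under the stated hypotheses. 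Your phrase ``it eventually occupies the first slot of some $\phi_M$'' is exactly what needs to be justified, and for a fixed network architecture the slot assignment is static; the factor that happens to vanish at a given $\bm{x}$ can perfectly well remain forever in the second slot. (Reversing the convention to $P_j=\phi_M(g_{k_j}(x_j),P_{j-1})$ fixes the step where the vanishing factor enters, but then the subsequent step $P_{j+1}=\phi_M(g_{k_{j+1}}(x_{j+1}),0)$ fails for the symmetric reason.) What actually closes this is not a consequence of the lemma as stated: one needs either the stronger two-sided property $\phi_M(0,y)=\phi_M(y,0)=0$ (which does hold for the standard construction $\phi_M(x,y)\approx\tfrac14\big((x+y)^2-(x-y)^2\big)$ with an even square-approximator, but is not asserted in Lemma~\ref{Le:product_W1p_error}), or else a support property for $\phi_{\bm{k}}$ built directly into the construction cited in Lemma~\ref{Le:phi_kg_k_error}. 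As written, your argument invokes a property of $\phi_M$ that you have not established, so the support claim — and hence the whole proof — is not yet complete.
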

\vskip2mm
\begin{proposition}\label{Le:patch_psik_error}
Given any $f \in X^m_p(\Omega)$ with  $1\leq p<\infty$ and $m\geq 2$, for any $W,L \in \mathbb{N}_{+}$ and $\bm k=(k_1,k_2,\ldots,k_d)\in \{1,2\}^d$, there is a function $\psi_{\bm k}\in \mathcal{NN}(C_4W(\log W)^{d+1},C_5L(\log L)^{d+1})$ such that

\begin{equation*}
\begin{aligned}
&\|\psi_{\bm k}-f\|_{W^1_p(\Omega_{\bm k})} \leq C_{m,d} W^{-2(m-1)}L^{-2(m-1)}(\log( 2WL))^{d}, \\
&\|\psi_{\bm k}-f\|_{L_p(\Omega_{\bm k})} \leq C_{m,d} W^{-2m}L^{-2m}(\log( 2WL))^{3d},
\end{aligned}
\end{equation*}
where $C_4=68 d(2d)^dm^2$, $C_5=89d^2 m^2$, and $C_{m,d}$ is a constant depending only on $m$ and $d$.     
\end{proposition}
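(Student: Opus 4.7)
The plan is to adapt the $L_p$-approximation construction from Section \ref{Sec:ReLU_DNNLp} to the $W^1_p$ setting by multiplying the aggregate sparse-grid sub-network by the cut-off $\phi_{\bm k}$ of Lemma \ref{Le:phi_kg_k_error}. The role of $\phi_{\bm k}$ is to suppress the approximation exactly on the thin strips where the step sub-networks embedded in the $L_p$ construction are non-constant (and hence have blow-up in the weak derivative), so that on $\Omega_{\bm k}$ only the smooth product and bit-extraction factors contribute to the $W^1$ part of the norm.

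Fix $n=\lceil 2\log(2WL)\rceil$ and start from the higher-order sparse-grid interpolant $\Pi_n^m f = \sum_{|\bm l|_1\le n+d-1} f_{\bm l}^m$ of \eqref{Eq:interpolation_m}. For each level $\bm l$ I would rebuild the subnetwork $\phi_{\bm l}^0$ as in Section \ref{Relucase2}: step networks from Lemma \ref{Le:stepfunction} produce piecewise-constant grid indices, bit-extraction from Lemma \ref{Le:fitsample} (with $s = md+m$) retrieves approximate coefficients $\tilde v_{\bm l,\bm i}^m$ satisfying \eqref{Eq:v_li_1error}, each hierarchical basis factor $\rho_{l_j,i_j,k}$ is realised by a single ReLU, and the tensor product of $m-1$ factors in each coordinate is computed by the $W^1_\infty$-accurate network of Lemma \ref{Le:polynomial_W1p}. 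Setting $\phi_{\bm l}^1 = 2^{md(n+d-1)}\phi_{\bm l}^0$ as in Section \ref{Relucase2}, I would then define $\psi_{\bm k} := \phi_M\bigl(\phi_{\bm k},\sum_{|\bm l|_1\le n+d-1}\phi_{\bm l}^1\bigr)$, where $\phi_M$ is the product network of Lemma \ref{Le:product_W1p_error} with $a$ chosen to dominate the uniform bounds of both factors.

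Invoking Lemma \ref{Le:product_W1p_to_Omegak} immediately localises the $W^1_\infty$ error of this product to $\Omega_{\bm k}$. By Definitions \ref{Def:Omega_k} and \ref{Def:g_k}, on $\Omega_{\bm k}$ every embedded step block from Section \ref{Relucase1} is locally constant in each coordinate, so the only nontrivial contribution to the weak derivative comes from the smooth bit-extraction and product factors, which already satisfy $W^1_\infty$ error bounds. The $W^1_p(\Omega_{\bm k})$ error then decomposes into three pieces: (i) the sparse-grid interpolation error $\|f-\Pi_n^m f\|_{W^1_p(\Omega)}$ of order $2^{-(m-1)n}n^d$, which matches $W^{-2(m-1)}L^{-2(m-1)}$ up to logarithmic factors; (ii) the product-approximation and coefficient errors from Lemmas \ref{Le:polynomial_W1p} and \ref{Le:fitsample}, propagated through Lemma \ref{Le:prod_approx_difference} exactly as in \eqref{Eq:phi_l0_error}--\eqref{Eq:phi_1l_error}; and (iii) the cut-off error $\|\phi_{\bm k}-g_{\bm k}\|_{W^1_\infty}\le 50 d^{5/2}(W+1)^{-4dnL}$ from Lemma \ref{Le:phi_kg_k_error}, which is super-polynomially small. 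The $L_p(\Omega_{\bm k})$ estimate follows from the same decomposition with the improved interpolation error of order $2m$, recovering the $(\log(2WL))^{3d}$-factor from Theorem \ref{Th:mainresult}. For the network size, each $\phi_{\bm l}^0$ lies in the class \eqref{Eq:phi_l1_size}; Lemma \ref{Le:size_transition} summed over the $|\{\bm l:|\bm l|_1\le n+d-1\}|\le (2d)^d(\log(8W))^d(\log(8L))^d$ multi-indices gives the advertised width/depth orders, and the outer composition with $\phi_{\bm k}$ and $\phi_M$ adds only constant multiplicative factors absorbed into $C_4$ and $C_5$.

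The main obstacle will be the geometric synchronisation of the step parameter $K$ in Lemma \ref{Le:stepfunction} (which pins the jump locations of every $\Phi_{\bm l}^1$) with the flat-region parameter $K$ of $g_{\bm k}$ in Definition \ref{Def:g_k}: these must be matched so that, uniformly over all $|\bm l|_1\le n+d-1$, the restriction of each step sub-network to $\Omega_{\bm k}$ really is constant in each coordinate, killing the distributional jump contribution to the weak derivative. Once this alignment is in place, the differentiation of $\phi_M(\phi_{\bm k},\sum_{\bm l}\phi_{\bm l}^1)$ on $\Omega_{\bm k}$ reduces to differentiating a smooth composition of product-network outputs, and the coefficient-propagation bookkeeping of \eqref{Eq:phi_l0_error}--\eqref{Eq:phi_1l_error} transfers to the $W^1_p$ norm with essentially no change.
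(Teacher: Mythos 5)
Your proposal conflates two stages of the argument that the paper deliberately keeps separate, and as a result the function you construct does not satisfy the conclusion of the Proposition.

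You define $\psi_{\bm k} := \phi_M\bigl(\phi_{\bm k},\sum_{|\bm l|_1\le n+d-1}\phi_{\bm l}^1\bigr)$, i.e.\ you bake the cut-off $\phi_{\bm k}\approx g_{\bm k}$ into $\psi_{\bm k}$. But then $\psi_{\bm k}\approx g_{\bm k}\cdot\Pi_n^m f$ on $\Omega_{\bm k}$, not $f$. Inspecting Definitions~\ref{Def:Omega_k} and~\ref{Def:g_k}: $\Omega_{\bm k}$ is precisely the \emph{support} of $g_{\bm k}$, and on that support $g_{\bm k}$ sweeps the full range $[0,1]$ (it is a tensor product of hat-shaped bumps, equal to $1$ only on an interior sub-lattice and $0$ at the boundary of each local cell). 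Hence $\psi_{\bm k}-f \approx (g_{\bm k}-1)f$ on $\Omega_{\bm k}$ is $O(1)$, not $O(W^{-2(m-1)}L^{-2(m-1)})$; the proposed $\psi_{\bm k}$ does not approximate $f$ in $W^1_p(\Omega_{\bm k})$. The multiplication by $\phi_{\bm k}$ is exactly the operation that Theorem~\ref{Th:mainresultW1p} performs \emph{after} invoking the Proposition (via $\phi:=\sum_{\bm k}\widetilde\phi(\phi_{\bm k},\psi_{\bm k})$ and the partition $\sum_{\bm k} g_{\bm k}\equiv 1$); if you already fold $\phi_{\bm k}$ into $\psi_{\bm k}$, the outer step would produce $g_{\bm k}^2 f$ rather than $g_{\bm k} f$.

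The paper's proof of the Proposition uses no cut-off at all: it takes $\psi_{\bm k}:=\sum_{|\bm l|_1\le n+d-1}\phi_{\bm k}^{\bm l}$, where each $\phi_{\bm k}^{\bm l}$ is the bare sparse-grid subnetwork. The $W^1_\infty(\Omega_{\bm k})$ control comes solely from the geometric fact you correctly identified as the ``main obstacle'': with $K=(2W)^2L^2$ one has $\Omega_{\bm k}\subset\Omega_\varepsilon^{\bm l}$ for all $|\bm l|_1\le n+d-1$, so every embedded step network from Lemma~\ref{Le:stepfunction} is locally constant on $\Omega_{\bm k}$, and the composition $\phi_{\bm k}^{\bm l}$ is therefore a smooth (product- and bit-extraction-driven) function there with a direct $W^1_\infty(\Omega_{\bm i,\varepsilon}^{\bm l})$ bound via Lemma~\ref{Le:polynomial_W1p}. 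That observation, combined with the interpolation estimate $\|f-\Pi_n^m f\|_{W^1_p(\Omega)}\lesssim 2^{-(m-1)n}$ (which you did not establish but which requires an explicit argument via $|v_{\bm l,\bm i}^m|$ bounds and inverse estimates), already gives the claimed rate on $\Omega_{\bm k}$ with no multiplication by $\phi_{\bm k}$. Dropping the cut-off from your construction and replacing ``uniform $W^1_\infty$ error suppressed by $\phi_{\bm k}$'' with ``uniform $W^1_\infty$ error directly on $\Omega_{\bm k}$ because the step blocks are constant there'' would bring your argument in line with the paper's.
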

\subsection{Proof of Theorem \ref{Th:mainresultW1p}}
In this subsection, we provide the proof of Theorem \ref{Th:mainresultW1p} by using Lemma \ref{Le:product_W1p_error}, Lemma \ref{Le:phi_kg_k_error}, Lemma \ref{Le:product_W1p_to_Omegak}, and Proposition \ref{Le:patch_psik_error}.
\vskip2mm
\begin{proof}
Given $m\geq 2$, by utilizing Lemma \ref{Le:phi_kg_k_error} and Proposition \ref{Le:patch_psik_error}, there exists a sequence of $$\psi_{\bm k}\in \mathcal{NN}(C_4W(\log W)^{d+1},C_5L(\log L)^{d+1})$$ and $$\phi_{\bm{k}} \in \mathcal{NN}((9+d)(W+1)+d-1,15 d(d-1) m L)$$ such that
\begin{equation}\label{Eq:tripple_Omega_error}
\begin{aligned}
&\|\psi_{\bm k}-f\|_{W^1_p(\Omega_{\bm k})}\leq C_{m,d} W^{-2(m-1)}L^{-2(m-1)}(\log( 2WL))^{d},\\
&\|\psi_{\bm k}-f\|_{L_p(\Omega_{\bm k})} \leq C_{m,d} W^{-2m}L^{-2m}(\log( 2WL))^{3d},\\
&\|\phi_{\bm{k}}(\bm{x})-g_{\bm{k}}(\bm{x})\|_{W^1_{ \infty}\left(\Omega\right)} \leq 50 d^{\frac{5}{2}}(W+1)^{-4 d m L},
\end{aligned}
\end{equation}
for each $\bm k \in\{1,2\}^d$. Using Lemma \ref{Le:product_W1p_error} with $a=C_{m,d}+50d^{\frac{5}{2}}+1$, there exists a function $\widetilde{\phi}\in \mathcal{NN}(15 (W+1),8mL)$ such that
$$
\|\widetilde{\phi}(x, y)-x y\|_{W^1_{ \infty}\left((-a, a)^2\right)} \leq 6 a^2 (W+1)^{-4mL}.
$$
Define
$$
\phi:=\sum_{\bm k\in\{1,2\}^d}\widetilde{\phi}(\phi_{\bm k},\psi_{\bm k}).
$$
Then, $\phi\in \mathcal{NN}(C_6W(\log W)^{d+1},C_7L(\log L)^{d+1})$ with $C_6=2^d(C_4+21d)$ and $C_7=C_5+8$.
Since 
$$
\sum_{\bm k\in\{1,2\}^d}g_{\bm k}(\bm x)\equiv1,\quad\text{for all}\quad \bm x\in \Omega,
$$
we have
$$
\begin{aligned}
\|f-\phi\|_{W^1_p(\Omega)}&=\Big\|\sum_{\bm k\in\{1,2\}^d}\left(fg_{\bm k}-\widetilde{\phi}(\phi_{\bm k},\psi_{\bm k})\right)\Big\|_{W^1_p(\Omega)}\\
&\leq \Big\|\sum_{\bm k\in\{1,2\}^d}\left(fg_{\bm k}-\phi_{\bm k}\cdot\psi_{\bm k}\right)\Big\|_{W^1_p(\Omega)}+\Big\|\sum_{\bm k\in\{1,2\}^d}\left(\phi_{\bm k}\cdot\psi_{\bm k}-\widetilde{\phi}(\phi_{\bm k},\psi_{\bm k})\right)\Big\|_{W^1_p(\Omega)}\\
&:=\mathcal{E}_1+\mathcal{E}_2.
\end{aligned}
$$
As for $\mathcal{E}_1$, by equation \eqref{Eq:tripple_Omega_error}, Lemma \ref{Le:product_W1p_to_Omegak}, and the fact 
$$
\begin{aligned}
\|\phi_{\bm k}\|_{W^1_{\infty}(\Omega_{\bm k})}
&\leq \|\phi_{\bm k}\|_{W^1_{\infty}(\Omega)}\leq\left\|\phi_{\bm{k}}(\bm{x})-g_{\bm{k}}\right\|_{W^1_{ \infty}(\Omega)}+\|g_{\bm{k}}\|_{W^1_{ \infty}(\Omega)}\leq C_dW^2L^2
\end{aligned}
$$ 
and 
$$
\|\phi_{\bm k}\|_{L_{\infty}(\Omega_{\bm k})}\leq \|\phi_{\bm k}\|_{L_{\infty}(\Omega)}\leq C_d,
$$
we have
$$
\begin{aligned}
\mathcal{E}_1
&\leq \sum_{\bm k\in\{1,2\}^d}\left(\|f\cdot\left(g_{\bm k}-\phi_{\bm k}\right)\|_{W^1_p(\Omega)}+\|\left(f-\psi_{\bm k}\right)\cdot \phi_{\bm k}\|_{W^1_p(\Omega)}\right)\\
&=\sum_{\bm k\in\{1,2\}^d}\left(\|f\cdot\left(g_{\bm k}-\phi_{\bm k}\right)\|_{W^1_p(\Omega)}+\|\left(f-\psi_{\bm k}\right)\cdot \phi_{\bm k}\|_{W^1_p(\Omega_{\bm k})}\right)\\
&\leq \sum_{\bm k\in\{1,2\}^d}C_{p,d}\left(\|f\|_{W^1_p(\Omega)}\|g_{\bm k}-\phi_{\bm k}\|_{W^1_{\infty}(\Omega)}+\|f-\psi_{\bm k}\|_{W^1_p(\Omega_{\bm k})} \|\phi_{\bm k}\|_{L_{\infty}(\Omega_{\bm k})}\right.\\
&\quad\left.+\|f-\psi_{\bm k}\|_{L_{p}(\Omega_{\bm k})} \|\phi_{\bm k}\|_{W^1_{\infty}(\Omega_{\bm k})}\right)\\
&\leq \sum_{\bm k\in\{1,2\}^d}C_{p,d}\left(\|f\|_{W^1_p(\Omega)}50 d^{\frac{5}{2}}(W+1)^{-4 d m L}+ W^{-2(m-1)}L^{-2(m-1)}(\log( 2WL))^{3d}\right)\\
&\leq C_{p,d} W^{-2(m-1)}L^{-2(m-1)}(\log( 2WL))^{3d}.
\end{aligned}
$$
As for $\mathcal{E}_2$, by equation \eqref{Eq:tripple_Omega_error} and Lemma \ref{Le:product_W1p_to_Omegak}, we have
$$
\begin{aligned}
\mathcal{E}_2
&\leq \sum_{\bm k\in\{1,2\}^d}\|\phi_{\bm k}\cdot\psi_{\bm k}-\widetilde{\phi}(\phi_{\bm k},\psi_{\bm k})\|_{W^1_p(\Omega)}\\
&\leq \sum_{\bm k\in\{1,2\}^d}\|\phi_{\bm k}\cdot\psi_{\bm k}-\widetilde{\phi}(\phi_{\bm k},\psi_{\bm k})\|_{W^1_p(\Omega_{\bm k})}\\
&\leq C_{m,d} \sum_{\bm k\in\{1,2\}^d}(W+1)^{-4mL} \|\phi_{\bm k}\|_{W^1_p(\Omega_{\bm k})}\|\psi_{\bm k}\|_{W^1_p(\Omega_{\bm k})}\\
&\leq C_{m,d} \sum_{\bm k\in\{1,2\}^d}(W+1)^{-4mL} W^2L^2 \\
&\leq C_{m,d} W^{-2(m-1)}L^{-2(m-1)}. 
\end{aligned}
$$
Thus, we obtain the error bound
$$
\|f-\phi\|_{W^1_p(\Omega)} \leq C_{m,p,d} W^{-2(m-1)}L^{-2(m-1)}(\log( 2WL))^{3d}.
$$
Substituting $W$ and $L$ with $\frac{W}{C_6 (\log 8W)^{d+1}}$ and $\frac{L}{C_7(\log 8L)^{d+1}}$, respectively, yields $
\phi \in \mathcal{NN}(W,L)
$ and the following error bound:
$$
\|f-\phi\|_{L_p(\Omega)}\leq \widetilde{C}_2W^{-2(m-1)}L^{-2(m-1)}(\log(8 W))^{2(m+1)(d+1)}(\log(8 L))^{2(m+1)(d+1)}
$$ where $\widetilde{C}_2$ is a constant depending only on $m,p$ and $d$.
 The proof is complete.
\end{proof}

\vskip2mm
In the remaining part of this section, we will prove Proposition \ref{Le:patch_psik_error} below.
\subsection{Proof of Proposition \ref{Le:patch_psik_error}}\label{Sec:proposition}
Let $n=\lceil2\log( 2WL)\rceil $ and $0<\varepsilon < 2^{-2n}$.  Without loss of generality, we only need to prove the case  $\bm k=(1,1,\ldots,1)$ with $K=(2W)^2L^2$ in the definition of $\Omega_{\bm k}$. Then for any multi-index $\bm l=(l_1,l_2,\ldots,l_d) \in \mathbb{N}_+^d$ satisfying $|\bm{l}|_1 \leq n+d-1$, we have $\Omega_{\bm k}\subset \Omega^{\bm l}_{\varepsilon}$ and we denote $\Omega_{\bm l,\bm i}$ by
$$
\Omega_{\bm l,\bm i}=\prod_{j=1}^d\left[\frac{i_j-1}{2^{l_j}},\frac{i_j+1}{2^{l_j}}\right],
$$
for each $\bm i=(i_1,i_2,\ldots,i_d)\in \mathcal{I}_{\bm l}$ which is defined in \eqref{Eq:i_l}. 
\subsubsection{The case \texorpdfstring{$m= 2$}{m = 2}}\label{ReluW1pcase1}
We first need to construct the desired function $\psi_{\bm k}$ to efficiently approximate the interpolation function
$$
\Pi_nf(\bm x)=\sum_{|\bm{l}|_1\leq n+d-1} \sum_{\bm{i} \in\mathcal{I}_{\bm l}} v_{\bm{l}, \bm{i}} \phi_{\bm{l}, \bm{i}}(\bm x)
$$
within the region $\Omega_{\bm k}$. Subsequently, combining this result with the error bound of $f$ and 
$\Pi_nf$ in the $W^1_p(\Omega)$  norm will lead to the desired outcomes.

\textbf{Step 1} Construct the desired function $\psi_{\bm k}$ to { efficiently approximate} the interpolation function $\Pi_nf$.

To this end, we first construct a function $\psi_{\bm k}^{\bm l}$ for
$$
f_{\bm l}(\bm x)=\sum_{\bm{i} \in\mathcal{I}_{\bm l}} v_{\bm{l}, \bm{i}} \phi_{\bm{l}, \bm{i}}(\bm x).
$$
It has been proved in Theorem \ref{Th:mainresult} that there exists a function $$
\phi_1
\in \mathcal{NN}(16 ds(W+1) \log(8 W)+7d,(26L+21)\log(8 L))
$$
such that for any $\bm x\in \Omega_{\bm i,\varepsilon}^{\bm l}$, it follows that
$$
\phi_1(\bm x)=\left(\begin{array}{c}
\phi_{l_1,i_1}(x_1) \\
\phi_{l_2,i_2}(x_2)\\
\vdots\\
\phi_{l_d,i_d}(x_d)\\
\widetilde{v}_{\bm l,\bm i}\\
\end{array}\right),
$$
where $
|\widetilde{v}_{\bm l,\bm i}-v_{\bm l,\bm i}|\leq 2W^{-2s}(2L)^{-2s}$ and $|\widetilde{v}_{\bm l,\bm i}|\leq 2$ for any $\bm i\in \mathcal{I}_{\bm l}$. By Lemma \ref{Le:polynomial_W1p} with $c=2$, there exists a function $\phi_2\in (15(W+1)+2d-1,28 d^2 L)$ such that $\|\phi_2\|_{W^1_{ \infty}\left([0,1]^{d}\times [-2,2]\right)} \leq 12\cdot 2^{d+1}$ and
\begin{equation}\label{Eq:phi_2_m2_W1perror}
\left\|\phi_2(\bm{x})-x_1 x_2 \cdots x_{d+1}\right\|_{W^1_{ \infty}\left([0,1]^{d}\times [-2,2]\right)} \leq 14 \cdot 2^{4d+4}(W+1)^{-14d L}.
\end{equation}
Define $\phi^{\bm l}_{\bm k}:=\phi_2\circ \phi_1$. By Lemma \ref{Le:concate},  $$\phi^{\bm l}_{\bm k} \in \mathcal{NN}(39sdW\log(8W),75d^2L\log(8L)).$$ 
When $s=3$ and utilizing equation \eqref{Eq:phi_2_m2_W1perror} along with the the inequality $\|\phi_{\bm{l}, \bm{i}}(\bm x)\|_{W^1_{\infty}(\Omega_{\bm i,\varepsilon}^{\bm l})}\leq W^{2}L^{2}$ implies
$$
\begin{aligned}
&\|\phi^{\bm l}_{\bm k}(\bm x)-f_{\bm l}(\bm x)\|_{W^1_{\infty}(\Omega_{\bm k})}\\
\leq &\|\phi^{\bm l}_{\bm k}(\bm x)-f_{\bm l}(\bm x)\|_{W^1_{\infty}(\Omega_{\varepsilon}^{\bm l})}\\
\leq &\max_{\bm i\in \mathcal{I}_{\bm l}}\|\phi^{\bm l}_{\bm k}(\bm x)-v_{\bm{l}, \bm{i}} \phi_{\bm{l}, \bm{i}}(\bm x)\|_{W^1_{\infty}(\Omega_{\bm i,\varepsilon}^{\bm l})}\\
\leq &\max_{\bm i\in \mathcal{I}_{\bm l}}\left(\|\phi^{\bm l}_{\bm k}(\bm x)-\widetilde{v}_{\bm{l}, \bm{i}} \phi_{\bm{l}, \bm{i}}(\bm x)\|_{W^1_{\infty}(\Omega_{\bm i,\varepsilon}^{\bm l})}+|\widetilde{v}_{\bm{l}, \bm{i}} -v_{\bm{l}, \bm{i}} |\|\phi_{\bm{l}, \bm{i}}(\bm x)\|_{W^1_{\infty}(\Omega_{\bm i,\varepsilon}^{\bm l})}\right) \\
\leq &\max_{\bm i\in \mathcal{I}_{\bm l}}\left(\max\left(1,\max_{1\leq j\leq d}|\phi_{l_j,i_j}(x_j)|_{W^1_{\infty}(\Omega_{i_j,\varepsilon}^{l_j})}\right)\right)\cdot C_d(W+1)^{-14dL}\\
&+2W^{-2s}(2L)^{-2s}\max_{\bm i\in \mathcal{I}_{\bm l}}\|\phi_{\bm{l}, \bm{i}}(\bm x)\|_{W^1_{\infty}(\Omega_{\bm i,\varepsilon}^{\bm l})}\\
=&\max_{1\leq j\leq d}2^{l_j}\cdot \left(C_d(W+1)^{-14dL}+2W^{-2s}L^{-2s}\right)\\
\leq &C_d(2W)^2L^2\cdot \left((W+1)^{-14dL}+2W^{-2s}L^{-2s}\right)\\
\leq &C_d W^{-4}L^{-4}.
\end{aligned}
$$
Define $\psi_{\bm k}=\sum_{|\bm{l}|_1\leq n+d-1}\phi^{\bm l}_{\bm k}$. According to Lemma \ref{Le:size_transition}, we have that $$\psi_{\bm k}\in \mathcal{NN}(C_4W(\log(8W))^{d+1},C_5L(\log(8L))^{d+1})$$ and
we obtain 
\begin{equation}\label{Eq:psi_k_f_n}
\begin{aligned}
\|\psi_{\bm k}-\Pi_nf\|_{W^1_{\infty}(\Omega_{\bm k})} &\leq \sum_{|\bm{l}|_1\leq n+d-1}\|\phi^{\bm l}_{\bm k}(\bm x)-f_{\bm l}(\bm x)\|_{W^1_{\infty}(\Omega_{\bm k})}\\
&\leq C_d\cdot (\log( 2WL))^d W^{-4}L^{-4},
\end{aligned}
\end{equation}
where $C_4=117d(2d)^{d}$ and $C_5=75d^2$.

\textbf{Step 2} Calculate the error bound of $f$ and 
$\Pi_nf$ in the $W^1_p(\Omega)$  norm.

We first note that
$$
\|f-\Pi_nf\|_{W^1_p(\Omega)}\leq\sum_{|\bm{l}|_1> n+d-1} \Big\|\sum_{\bm{i} \in\mathcal{I}_{\bm l}} v_{\bm{l}, \bm{i}} \phi_{\bm{l}, \bm{i}}(\bm x)\Big\|_{W^1_p(\Omega)}.
$$
By Lemma 2.1 in \cite{Bungartz_Griebel2004} and the fact that
$$
|v_{\bm l,\bm i}|\leq  2^{-(2-1/p)|\bm{l}|_1-d}\left\|D^{\bm 2}f\right\|_{L_p(\Omega_{\bm l,\bm i})}, 
$$
we can derive that
$$
\begin{aligned}
\Big\|\sum_{\bm{i} \in\mathcal{I}_{\bm l}} v_{\bm{l}, \bm{i}} \phi_{\bm{l}, \bm{i}}(\bm x)\Big\|_{W^1_p(\Omega)}^p &=\sum_{\bm{i} \in\mathcal{I}_{\bm l}} |v_{\bm{l}, \bm{i}}|^p\| \phi_{\bm{l}, \bm{i}}(\bm x)\|_{W^1_p(\Omega_{\bm l,\bm i})}^p\\
&\leq C_{p,d}\left(\sum_{j=1}^d2^{l_jp}\right) \cdot\sum_{\bm{i} \in\mathcal{I}_{\bm l}} |v_{\bm{l}, \bm{i}}|^p\|\phi_{\bm l,\bm i}(\bm x)\|_{L_p(\Omega_{\bm l,\bm i})}^p\\
&\leq C_{p,d}\left(\sum_{j=1}^d2^{l_jp}\right)\cdot 2^{-2p|\bm l|_{1}}.
\end{aligned}
$$
Thus, combining this result with the norm equivalence between $l_p$ and $l_2$ in $\mathbb R^d$ and the inequality (proof of  \cite[Theorem 3.8]{Bungartz_Griebel2004}) 
$$
\sum_{|\bm {l}|_1=i}\left(\sum_{j=1}^d 4^{l_j}\right)^{1 / 2} \leq d \cdot 2^i,
$$
we proceed as follows:
\begin{equation}\label{Eq:f_fn_W1p_error}
\begin{aligned}
\|f-\Pi_nf\|_{W^1_p(\Omega)}&\leq C_{p,d}\sum_{|\bm{l}|_1> n+d-1} \left(\sum_{j=1}^d2^{l_jp}\right)^{\frac{1}{p}}\cdot 2^{-2|\bm l|_{1}}\\
&=C_{p,d}\sum_{i=n+d}^{\infty} 2^{-2i}\cdot\sum_{|\bm l|_1=i}\left(\sum_{j=1}^d2^{l_jp}\right)^{\frac{1}{p}}\\
&\leq C_{p,d}\sum_{i=n+d}^{\infty} 2^{-2i}\cdot\sum_{|\bm l|_1=i}\left(\sum_{j=1}^d2^{2l_j}\right)^{\frac{1}{2}}\\
&\leq C_{p,d}\sum_{i=n+d}^{\infty} 2^{-i}\leq C_{p,d}2^{-n}\leq C_{p,d}W^{-2}L^{-2}.
\end{aligned}
\end{equation}
Combining this result with \eqref{Eq:sparse_error} and \eqref{Eq:psi_k_f_n}, we obtain
\begin{align*}
\|\psi_{\bm k}-f\|_{W^1_p(\Omega_{\bm k})}&\leq C_{p,d}\cdot  W^{-2}L^{-2}(\log( 2WL))^d,\\
\|\psi_{\bm k}-f\|_{L_{p}(\Omega_{\bm k})}&\leq C_{p,d}\cdot  W^{-4}L^{-4}(\log( 2WL))^{3d}.    
\end{align*}

\subsubsection{The case \texorpdfstring{$m\geq 3$}{m >= 3}}\label{ReluW1pcase2}
We need to first construct the desired function $\psi_{\bm k}$ to efficiently approximate the interpolation function
$$
\Pi_n^mf(\bm x)=\sum_{|\bm{l}|_1\leq n+d-1} \sum_{\bm{i} \in\mathcal{I}_{\bm l}} v_{\bm{l}, \bm{i}}^m \phi^m_{\bm{l}, \bm{i}}(\bm x).
$$
Combining this result with the error bound of $f$ and 
$\Pi_n^mf$ in the $W^1_p(\Omega)$  norm will yield to the desired results.

\textbf{Step 1} Construct the desired function $\psi_{\bm k}$ to { efficiently approximate} the interpolation function $\Pi_n^mf$.

To this end, we first construct a neural network $\psi_{\bm k}^{\bm l}$ to effectively approximate
$$
f_{\bm l}^m(\bm x)=\sum_{\bm{i} \in\mathcal{I}_{\bm l}} v_{\bm{l}, \bm{i}}^m \phi^{m}_{\bm{l}, \bm{i}}(\bm x).
$$
It has been demonstrated in the proof of Theorem \ref{Th:mainresult} that there exists a function $\phi_1\in \mathcal{NN}(9mdW,14L)$ such that for any $\bm x\in \Omega_{\bm i,\varepsilon}^{\bm l}$,
$$
\begin{aligned}
\phi_1(\bm x)&=\left(\psi_{l_1,i_1,1}(x_1),\ldots,\psi_{l_1,i_1,m-1}(x_1),\ldots,\psi_{l_d,i_d,1}(x_d),\ldots,\psi_{l_d,i_d,m-1}(x_d),\bm x^{\top}\right)^{\top},
\end{aligned}
$$
where $0\leq \psi_{l_j,i_j,k}=2^{-n-d+1}\rho_{l_j,i_j,k}\leq 1$ for any $j=1,2,\ldots,d$ and $k=1,2,\ldots,m-1$. By Lemma 3.5 in \cite{HonYang2022simultaneous} with $k=m-1$, there exists a function $\phi_2 \in (9(W+1)+m-2,14 (m-1)(m-2)d L)$ such that $\|\phi_2\|_{W^1_{ \infty}\left([0,1]^{m-1}\right)} \leq 18$ and
$$
\left\|\phi_2(\bm{x})-x_1 x_2 \cdots x_{m-1}\right\|_{W^1_{ \infty}\left([0,1]^{m-1}\right)} \leq 10m(W+1)^{-7 (m-1)d L}.
$$ 
Similar to the proof of the case $m\geq 3$ in Theorem \ref{Th:mainresult}, we obtain a function $$\widetilde{\phi}\in \mathcal{NN}(20mdW,28m^2d^2L)$$ such that
for any $\bm x\in \Omega_{\bm i,\varepsilon}^{\bm l}$ we have
$$
\widetilde{\phi}(\bm x)=\left(\begin{array}{c}
\widetilde{\psi}_{l_1,i_1}(x_1)\\
\widetilde{\psi}_{l_2,i_2}(x_2)\\

\vdots\\
\widetilde{\psi}_{l_d,i_d}(x_d)\\
\bm x
\end{array}\right),
$$
and it satisfies
\begin{equation}\label{Eq:widetilde_psi_error_w1p}
\begin{aligned}
\left\|\widetilde{\psi}_{l_j,i_j}(x_j)-\prod_{k=1}^{m-1}\psi_{l_j,i_j,k}(x_j)\right\|_{W^1_{ \infty}\left(\Omega_{i_j,\varepsilon}^{l_j}\right)}\leq 10m(W+1)^{-7 (m-1)d L},\,\,
|\widetilde{\psi}_{l_j,i_j}(x_j)|\leq 2
\end{aligned}
\end{equation}
for all $j=1,2,\ldots,d$. For $g_2\in \mathcal{NN}(4dW+5d,8L+5)$ defined in \eqref{Eq:g_2}, let $\phi_3=g_2\circ\widetilde{\phi}$. Then $\phi_3\in \mathcal{NN}(20mdW,41m^2d^2L)$ and
for any $\bm x\in \Omega_{\bm i,\varepsilon}^{\bm l}$ we have
$$
\phi_3(\bm x)=\left(\begin{array}{c}
\widetilde{\psi}_{l_1,i_1}(x_1)\\
\widetilde{\psi}_{l_2,i_2}(x_2)\\

\vdots\\
\widetilde{\psi}_{l_d,i_d}(x_d)\\
\operatorname{ind}(\bm i)
\end{array}\right).
$$
Since 
$|v^m_{\bm l,\bm i}|\leq C_{m,d}$ for any $\bm i\in \mathcal{I}_{\bm l}$ and $$\max_{\bm i\in \mathcal{I}_{\bm l}}|\operatorname{ind}(\bm i)|\leq \prod_{j=1}^d2^{l_j-1}-1\leq W^2(2L)^2-1,$$ by Lemma \ref{Le:fitsample}, there exists a function $$\phi_4\in \mathcal{NN}(16 s(W+1) \log(8 W),5(2L+2)\log(8 L))$$
such that $0 \leq \phi_4(x) \leq 1$ for any $x \in \mathbb{R}$,
satisfying
$$
\Big|\phi_4(\operatorname{ind}(\bm i))-\frac{v^m_{\bm l,\bm i}+C_{m,d}}{2C_{m,d}}\Big|\leq W^{-2s}(2L)^{-2s}
$$
for any $\bm i\in \mathcal{I}_{\bm l}$. Consequently, let $\widetilde{\phi}_4:=2C_{m,d}\phi_4-C_{m,d}\in \mathcal{NN}(16 s(W+1) \log(8 W),5(2L+2)\log(8 L))$ with the uniform upper bound $C_{m,d}$ in $\mathbb R$ and $g_6$ be the function satisfying 
$$g_6\binom{\bm z_1}{z_2}=\left(\sigma(\bm z_1)-\sigma(-\bm z_1),\widetilde{\phi}_4( z_2)\right)^{\top},
$$
for any $\bm z_1\in \mathbb R^d, z_2\in \mathbb R$. Using Lemma \ref{Le:concate}, we have
\begin{align*}
g_6&\in \mathcal{NN}(16 s(W+1) \log(8 W)+2d,5(2L+2)\log(8 L)) ,\\
\phi_5&:= g_6 \circ \phi_3
\in \mathcal{NN}(34smdW \log(8 W),61m^2d^2L\log(8 L))  
\end{align*}
such that for any $\bm x\in \Omega_{\bm i,\varepsilon}^{\bm l}$, it holds that
$$
\phi_5(\bm x)=\left(\begin{array}{c}
\widetilde{\psi}_{l_1,i_1}(x_1)\\
\widetilde{\psi}_{l_2,i_2}(x_2)\\

\vdots\\
\widetilde{\psi}_{l_d,i_d}(x_d)\\
\widetilde{v}^m_{\bm l,\bm i}
\end{array}\right),
$$
where $\widetilde{v}^m_{\bm l,\bm i}=\widetilde{\phi}_4(\operatorname{ind}(\bm i))$ and \begin{equation}\label{Eq:v_bound_W1p}
|\widetilde{v}^m_{\bm l,\bm i}-v^m_{\bm l,\bm i}|\leq 2C_{m,d}W^{-2s}(2L)^{-2s},\quad \text{for any}\,\, \bm i\in \mathcal{I}_{\bm l}.
\end{equation}
By utilizing Lemma \ref{Le:polynomial_W1p} with $c=3C_{m,d}\geq 3m$, we obtain a function $\phi_6\in (9(W+1)+2d+1, 28m d^2 L)$ such that $\|\phi_6\|_{W^1_{ \infty}\left(\Omega_c\right)} \leq C_{m,d}$ and
$$
\left\|\phi_6(\bm{x})-x_1 x_2 \cdots x_{d+1}\right\|_{W^1_{ \infty}\left(\Omega_c\right)} \leq C_{m,d}(W+1)^{-14d mL}.
$$    
Let $\phi^{\bm l}_{\bm k}:=2^{md(n+d-1)}\cdot\phi_6\circ \phi_5$. Then, by Lemma \ref{Le:concate},
$$\phi^{\bm l}_{\bm k}\in \mathcal{NN}(34smdW\log (8W),89m^2d^2L\log (8L))$$ and we have
$$
\begin{aligned}
&\|\phi^{\bm l}_{\bm k}(\bm x)-f^m_{\bm l}(\bm x)\|_{W^1_{\infty}(\Omega_{\bm k})}\\
\leq &\|\phi^{\bm l}_{\bm k}(\bm x)-f^m_{\bm l}(\bm x)\|_{W^1_{\infty}(\Omega_{\varepsilon}^{\bm l})}\\
\leq &\max_{\bm i\in \mathcal{I}_{\bm l}}\|\phi^{\bm l}_{\bm k}(\bm x)-v^m_{\bm{l}, \bm{i}} \phi^{m}_{\bm{l}, \bm{i}}(\bm x)\|_{W^1_{\infty}(\Omega_{\bm i,\varepsilon}^{\bm l})}\\
=&2^{md(n+d-1)}\cdot\max_{\bm i\in \mathcal{I}_{\bm l}}\Big\|\phi_6\circ \phi_5(\bm x)-v^m_{\bm{l}, \bm{i}} \prod_{j=1}^d \prod_{k=1}^{m-1} \psi_{l_j,i_j,k}(x_j)\Big\|_{W^1_{\infty}(\Omega_{\bm i,\varepsilon}^{\bm l})}\\
\leq &2^{md(n+d-1)}\cdot\left(\max_{\bm i\in \mathcal{I}_{\bm l}}\Big\|\phi_6\circ \phi_5(\bm x)-\widetilde{v}^m_{\bm{l}, \bm{i}} \prod_{j=1}^d \widetilde{\psi}_{l_j,i_j}(x_j)\Big\|_{W^1_{\infty}(\Omega_{\bm i,\varepsilon}^{\bm l})}\right.\\
&+ \max_{\bm i\in \mathcal{I}_{\bm l}}\Big\|\widetilde{v}^m_{\bm{l}, \bm{i}} \prod_{j=1}^d \widetilde{\psi}_{l_j,i_j}(x_j)-v^m_{\bm{l}, \bm{i}} \prod_{j=1}^d \widetilde{\psi}_{l_j,i_j}(x_j)\Big\|_{W^1_{\infty}(\Omega_{\bm i,\varepsilon}^{\bm l})}\\
&+\left.\max_{\bm i\in \mathcal{I}_{\bm l}}\Big\|v^m_{\bm{l}, \bm{i}} \prod_{j=1}^d \widetilde{\psi}_{l_j,i_j}(x_j)-v^m_{\bm{l}, \bm{i}} \prod_{j=1}^d \prod_{k=1}^{m-1} \psi_{l_j,i_j,k}(x_j)\Big\|_{W^1_{\infty}(\Omega_{\bm i,\varepsilon}^{\bm l})}\right)\\
:=&E_1+E_2+E_3.
\end{aligned}
$$
As for $E_1$, we have
$$
\begin{aligned}
E_1&\leq C_{m,d}2^{md(n+d-1)}\cdot
\max_{\bm i\in \mathcal{I}_{\bm l}}\max_{1\leq j\leq d}|\widetilde{\psi}_{l_j,i_j}(x_j)|_{W^1_{\infty}(\Omega_{i_j,\varepsilon}^{l_j})}\cdot(W+1)^{-14  mdL}\\
&\leq C_{m,d}W^{2md+2}L^{2md+2}(W+1)^{-14mdL}\\
&\leq C_{m,d}W^{-2m}L^{-2m}.
\end{aligned}
$$
As for $E_2$, we have
$$
\begin{aligned}
E_2&\leq \max_{\bm i\in \mathcal{I}_{\bm l}} |\widetilde{v}^m_{\bm{l}, \bm{i}}-v^m_{\bm{l}, \bm{i}}|\cdot\Big\| \prod_{j=1}^d \widetilde{\psi}_{l_j,i_j}(x_j)\Big\|_{W^1_{\infty}(\Omega_{\bm i,\varepsilon}^{\bm l})}\\
&\leq 2C_{m,d}W^{-2s}(2L)^{-2s}\cdot\max_{\bm i\in \mathcal{I}_{\bm l}} \left(\Big\| \prod_{j=1}^d \widetilde{\psi}_{l_j,i_j}(x_j)\Big\|_{L_{\infty}(\Omega_{\bm i,\varepsilon}^{\bm l})},\right.\\
&\quad\left.\max_{1\leq k\leq d}\Big\|\widetilde{\psi}_{l_k,i_k}(x_k)^{\prime} \prod_{j\neq k}^d \widetilde{\psi}_{l_j,i_j}(x_j)\Big\|_{L_{\infty}(\Omega_{\bm i,\varepsilon}^{\bm l})}\right)\\
&\leq C_{m,d}W^{-2s+2}L^{-2s+2}\\
&\leq C_{m,d}W^{-2m}L^{-2m},
\end{aligned}
$$
by taking $s=m+1$ and using inverse estimates and Equation \eqref{Eq:widetilde_psi_error_w1p} for the second inequality.
As for $E_3$, by Lemma \ref{Le:prod_approx_difference} and \eqref{Eq:widetilde_psi_error_w1p}, we have
$$
\begin{aligned}
E_3&\leq C_{m,d}
\max_{\bm i\in \mathcal{I}_{\bm l}}\Big\| \prod_{j=1}^d \widetilde{\psi}_{l_j,i_j}(x_j)- \prod_{j=1}^d \prod_{k=1}^{m-1} \psi_{l_j,i_j,k}(x_j)\Big\|_{W^1_{\infty}(\Omega_{\bm i,\varepsilon}^{\bm l})}\\
&\leq C_{m,d}(W+1)^{-7(m-1)dL}\\
&\leq C_{m,d}W^{-2m}L^{-2m}.
\end{aligned}
$$
Thus,
$\phi^{\bm l}_{\bm k}\in \mathcal{NN}(68m^2dW\log (8W),89m^2d^2L\log (8L))$,
$$
\|\phi^{\bm l}_{\bm k}(\bm x)-f^m_{\bm l}(\bm x)\|_{W^1_{\infty}(\Omega_{\bm k})}\leq C_{m,d} W^{-2m}L^{-2m}.
$$
Define $\psi_{\bm k}=\sum_{|\bm{l}|_1\leq n+d-1}\phi^{\bm l}_{\bm k}$. By Lemma \ref{Le:size_transition}, $$\psi_{\bm k}\in \mathcal{NN}(C_4W(\log(8W))^{d+1},C_5L(\log(8L))^{d+1})$$ and
we have 
\begin{equation}\label{Eq:psi_k_I_nf}
\begin{aligned}
\|\psi_{\bm k}-\Pi_n^mf\|_{W^1_{\infty}(\Omega_{\bm k})} &\leq \sum_{|\bm{l}|_1\leq n+d-1}\|\phi^{\bm l}_{\bm k}(\bm x)-f_{\bm l}^m(\bm x)\|_{W^1_{\infty}(\Omega_{\bm k})}\\
&\leq C_{m,d}\cdot  W^{-2m}L^{-2m}(\log( 2WL))^d,
\end{aligned}
\end{equation}
where $C_4=68 d(2d)^dm^2$ and $C_5=89d^2 m^2$.

\textbf{Step 2} Calculate the error bound of $f$ and 
$\Pi_n^mf$ in the $W^1_p(\Omega)$  norm.

We first note that
$$
\|f-\Pi_n^mf\|_{W^1_p(\Omega)}\leq\sum_{|\bm{l}|_1> n+d-1} \Big\|\sum_{\bm{i} \in\mathcal{I}_{\bm l}} v_{\bm{l}, \bm{i}}^m \phi^m_{\bm{l}, \bm{i}}\Big\|_{W^1_p(\Omega)}.
$$
For any $\bm l$ satisfying $|\bm{l}|_1> n+d-1$, by the results 
$$
\|\phi_{\bm{l}, \bm{i}}^m\|_{L_p(\Omega)} \leq 1.117^d \cdot 2^{d / p} \cdot 2^{-|\bm{l}|_1 / p}, \quad p\in [1,\infty],
$$ in \cite{Bungartz_Griebel2004} and the facts (see \cite[Lemma 4.6]{Bungartz_Griebel2004} and \cite{lizhang2025korobov})
$$
|v_{\bm l,\bm i}^m|\leq  C_{m,d} \cdot 2^{-(m-1/p)|\bm l|_1}\cdot\left\|D^{\bm{m}} f\right\|_{L_p(\Omega_{\bm l,\bm i})}, 
$$
we have
$$
\begin{aligned}
\Big\|\sum_{\bm{i} \in\mathcal{I}_{\bm l}} v_{\bm{l}, \bm{i}}^m \phi^m_{\bm{l}, \bm{i}}(\bm x)\Big\|_{W^1_p(\Omega)}^p 
=&\sum_{\bm{i} \in\mathcal{I}_{\bm l}} |v_{\bm{l}, \bm{i}}^m|^p\| \phi^m_{\bm{l}, \bm{i}}(\bm x)\|_{W^1_p(\Omega_{\bm l,\bm i})}^p\\
\leq &C_{d,p }\left(1+\sum_{j=1}^d2^{pl_j}\right) \sum_{\bm{i} \in\mathcal{I}_{\bm l}} |v_{\bm{l}, \bm{i}}^m|^p\|\phi^m_{\bm l,\bm i}\|_{L_p(\Omega_{\bm l,\bm i})}^p\\
\leq &C_{m,d,p }\left(1+\sum_{j=1}^d2^{pl_j}\right)\cdot 2^{-mp|\bm l|_1}
,
\end{aligned}
$$
where the first inequality is due to the well-known inverse estimates (see \cite[Lemma 4.5.3]{Brenner_Scott1994} for instance). Thus, similar to \eqref{Eq:f_fn_W1p_error}, we have
$$
\begin{aligned}
\|f-\Pi_n^mf\|_{W^1_p(\Omega)}&\leq C_{m,d,p } \sum_{|\bm{l}|_1> n+d-1} \left(\sum_{j=1}^d2^{pl_j}\right)^{1/p}\cdot 2^{-m|\bm l|_1}\\
&\leq C_{m,d,p } \sum_{i=n+d}^{\infty}2^{-mi}\cdot\sum_{|\bm{l}|=i} \left(\sum_{j=1}^d2^{pl_j}\right)^{1/p}\\
&\leq C_{m,d,p } \sum_{i=n+d}^{\infty}2^{-(m-1)i}\\
&\leq C_{m,d,p}2^{-(m-1)n}\leq C_{m,d,p }W^{-2(m-1)}L^{-2(m-1)}.
\end{aligned}
$$
Therefore, combining this result with \eqref{Eq:Sparse_error_If_n}, \eqref{Eq:psi_k_I_nf}, we have
\begin{align*}
\|\psi_{\bm k}-f\|_{W^1_p(\Omega_{\bm k})}&\leq C_{p,d}\cdot  W^{-2(m-1)}L^{-2(m-1)}(\log( 2WL))^d,\\
\|\psi_{\bm k}-f\|_{L_{p}(\Omega_{\bm k})}&\leq C_{p,d}\cdot  W^{-2m}L^{-2m}(\log( 2WL))^{3d}.
\end{align*}
The proof is complete.

\vspace{2mm}
\begin{remark}
For $f\in X^m_{\infty}(\Omega)$, it is worth noting that the error of approximating sparse grid interpolation function using ReLU DNNs in the proof of Proposition \ref{Le:patch_psik_error} and the used lemmas are all valid for the norm $W^1_{\infty}$. Therefore, once the sparse grid interpolation error has been established in the $W^1_{\infty}$ norm, the corresponding approximation error for ReLU DNNs applied to $f \in X^m_{\infty}(\Omega)$ can be derived under the $W^1_{\infty}$ norm.    
\end{remark}
\vskip2mm

\section{Conclusion and Future Work}\label{Sec:conclusion}
Using the bit-extraction technique, we have proven nearly optimal super-approximation $L_p$ and $W^1_p$ error bounds of ReLU DNNs for Korobov functions ($1\leq p<\infty$). Furthermore, we also provide remarks on the approximation performance under certain other network structures and norms.
The bit extraction techniques based on ReLU networks, Floor-ReLU networks (\cite{shen2021deepfloor}), or FLES networks (\cite{shenyangzhang2021neuralthree}) have been well developed. Nevertheless, investigating bit extraction techniques based on other activation functions will lead to a range of network approximation characteristics across various function spaces. This will remain a future research direction. 

\section*{Acknowledgments}
Y. L. was supported by the National Natural Science Foundation of China under grant 12471346. The authors would like to thank the anonymous referees for helpful remarks that improved
the quality and presentation of this paper.

\bibliographystyle{plain}
\bibliography{r}
\end{document}